\documentclass[11pt, a4paper, logo, onecolumn,copyright]{tmlrmel}

\usepackage[authoryear, sort&compress, round]{natbib}
\usepackage{times}
\usepackage{latexsym}
\tcbuselibrary{breakable}
\tcbset{enhanced, attach boxed title to top left={yshift=-3mm,yshifttext=-1mm, xshift=3mm}, boxed title style={size=small}}

\usepackage[table]{xcolor}
\usepackage{graphicx}
\usepackage{mathtools}
\usepackage{amssymb}
\usepackage{bbm}
\usepackage{amsmath}
\usepackage{amsthm}
\usepackage{tabularx}
\usepackage{multirow, threeparttable, booktabs, makecell, adjustbox}
\usepackage{caption}
\usepackage{tcolorbox}
\usepackage{graphicx}
\usepackage{subcaption}
\newtheorem{theorem}{Theorem}[section]
\newtheorem{assumption}{Assumption}[section]
\newtheorem{definition}{Definition}[section]
\newtheorem{lemma}[theorem]{Lemma}
\newcommand\bs\boldsymbol
\tcbuselibrary{theorems}
\usepackage{enumitem}
\usepackage[T1]{fontenc}
\usepackage{microtype}
\usepackage{inconsolata}
\usepackage{graphicx}
\usepackage{enumitem}
\usepackage{wrapfig}

\newcounter{mybox}
\renewcommand{\themybox}{\thesection.\arabic{mybox}}
\newtcolorbox{mybox}[1][]{
  colback=gray!5,
  colframe=violet,
  fonttitle=\bfseries,
before upper={\refstepcounter{mybox}},
title={Box~\themybox\if\relax\detokenize{#1}\relax\else: #1\fi},
}

\title{LAD: Learning Advantage Distribution for  Reasoning}

\author{Wendi Li}
\author{Sharon Li}

\affil{Department of Computer Sciences\\University of Wisconsin--Madison}

\correspondingauthors{\{\texttt{wli679}, \texttt{sharonli}\}\texttt{@cs.wisc.edu}}

\coderepo{February 17, 2026}

\begin{abstract}
Current reinforcement learning objectives for large-model reasoning primarily focus on maximizing expected rewards. This paradigm can lead to overfitting to dominant reward signals, while neglecting alternative yet valid reasoning trajectories, thereby limiting diversity and exploration.
To address this issue, we introduce Learning Advantage Distributions (LAD), a distribution-matching framework that replaces advantage maximization with learning the advantage-induced distribution. By establishing the equivalence between the optimal policy update and an advantage-based target distribution, we derive a practical LAD objective formulated as minimizing an $f$-divergence between the policy-induced and advantage-induced distributions. This yields a gradient update that increases likelihood for high-advantage responses while suppressing over-confident probability growth, preventing collapse without requiring auxiliary entropy regularization.
LAD incurs no extra training cost compared to GRPO and scales naturally to LLM post-training. In a controlled bandit setting, LAD faithfully recovers the multimodal advantage distribution, validating the theoretical formulation. Experiments on math and code reasoning tasks across several LLM backbones show that LAD reliably improves both accuracy and generative diversity. Code is available \href{https://github.com/WindyLee0822/LAD}{here}.
\end{abstract}

\begin{document}
\maketitle

\vspace{-0.5em}
\section{Introduction}

Large language models have recently shown impressive performance across a wide range of complex reasoning tasks \citep{openai-o1,deepseek-r1,qwen3}. A central approach driving this progress is reinforcement learning with verifiable rewards (RLVR) \citep{deepseek-r1,grpo}. In RLVR, response correctness is automatically verified using external tools such as compilers, symbolic solvers, or mathematical checkers. By replacing noisy and unreliable feedback with deterministic, verification-based rewards, RLVR enables more stable optimization and mitigates reward hacking, leading to substantial gains in complex logical reasoning tasks~\citep{spurious,oneexample,drgrpo, deepcoder,code1,code2, agent3,agent1,agent2}.

However, current RLVR objectives for large-model reasoning are built around maximizing expected reward, a formulation that inherently overfits to dominant reward signals. This can drive policies toward mode collapses, suppressing alternative reasoning trajectories and reducing exploratory diversity \citep{entropy-mechanism,invisibleleash}. 
As illustrated in Figure~\ref{fig:toy-dist} (left), the policy distribution trained with the reward maximization objective (GRPO) concentrates on the most desirable action and neglects other meaningful modes. This reveals a fundamental limitation: the reward-maximization paradigm at the heart of RLVR imposes a structural bottleneck on the diversity and exploration~\citep{diverse-vital, flowrl}. Although entropy-based constraints and advantage regularization methods have been proposed to mitigate this issue~\citep{dapo, entropy-in-adv, 8020, decompose-entropy}, these approaches largely retain reward expectation maximization as their core objective. 

To address this limitation, we propose a framework that makes the policy \textbf{L}earn the \textbf{A}dvantage \textbf{D}istribution (\textbf{LAD}). Our work takes a perspective shift: we formalize policy optimization as learning to match an advantage-induced distribution, rather than maximizing expected advantage. This shift turns RLVR from a scalar-expectation objective into a distribution-matching problem, allowing the policy to preserve multiple promising reasoning modes instead of collapsing onto a single high-reward trajectory (Figure~\ref{fig:toy-dist}, right). We develop both the theoretical LAD objective and a practical surrogate that avoids intractable computation while preserving the same optima. The training cost of LAD is the same as standard GRPO, making LAD scalable to real-world LLM reasoning tasks. Concurrent work such as FlowRL~\citep{flowrl} also departs from pure reward maximization with a reward-matching objective, but it does not learn the full advantage-induced action distribution and can be proved as a more constrained instance of our broader distribution-matching framework.

Finally, we validate LAD through both controlled experiments and LLM reasoning evaluations. In a carefully designed bandit environment, LAD can learn the policy distribution that faithfully aligns with the underlying advantage-induced distribution. Additionally, we conduct experiments on six challenging mathematical reasoning and three coding reasoning benchmarks across multiple LLM backbones. LAD consistently improves both accuracy and diversity over GRPO, entropy-regularized baselines, and FlowRL. Further, we conduct comprehensive ablations to understand LAD's gains across divergence choices and training conditions.  Collectively, these findings validate LAD as a principled and effective training objective for RL-based post-training of LLMs, particularly in complex reasoning settings. We summarize our contributions as follows:
\begin{enumerate}[itemsep=0.0em, parsep=0.1pt, topsep=0pt, itemindent=0.0em]
    \item We introduce LAD, a new objective that reframes RLVR as matching the advantage-induced action distribution instead of maximizing expected advantage.
    \item We derive a practical surrogate with theoretical justification and improved optimization behavior, enabling scaling to real-world LLM reasoning tasks with comparable training cost as GRPO.
    \item Through extensive experiments and systematic ablations, we provide empirical evidence that LAD achieves superior performance and improves diversity.
    
\end{enumerate}

\begin{figure*}[t!]
    \centering
    \includegraphics[width=0.98\linewidth]{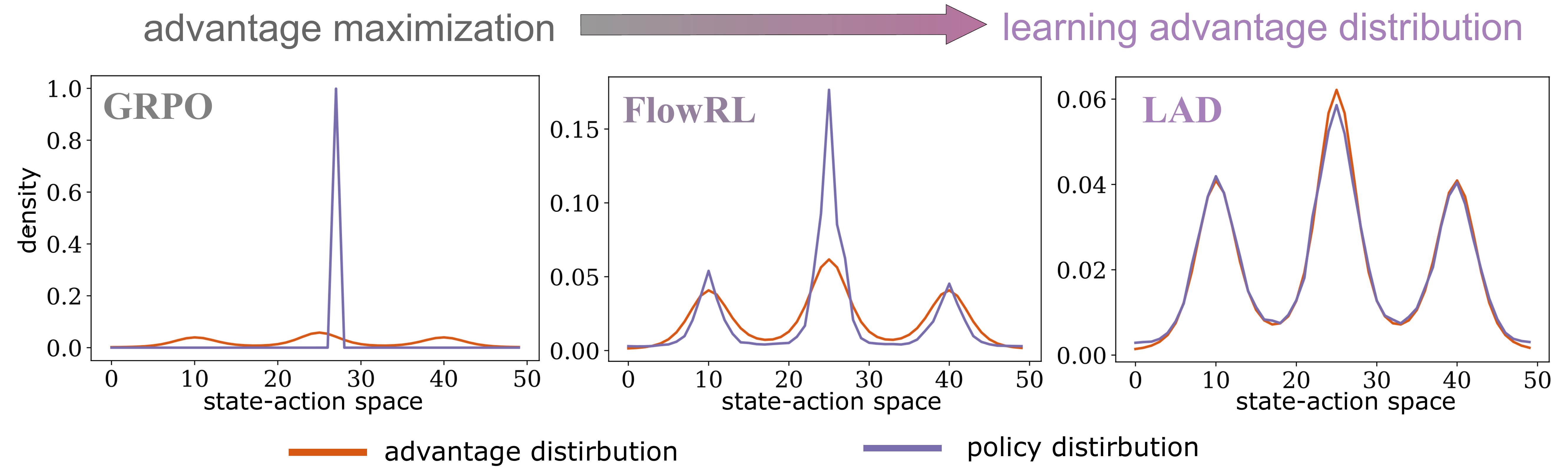}
    \vspace{-0.8em}
    \caption{Advantage distribution and policy distribution on 50-arm bandit trained via different training objectives.}
    \vspace{-1em}
    \label{fig:toy-dist}
\end{figure*}

\section{Preliminary}

\paragraph{Reward Maximization in RL for LLMs.} 
We consider a contextual bandit formulation of reinforcement learning (RL) for LLMs. Let $x \in \rho$ denote a prompt sampled from a distribution $\rho$, and let the model (policy) $\pi_\theta(y|x)$ generate a response $y \in \mathcal{Y}$. Upon observing the response, the environment returns a scalar reward $r(x,y) \in \mathbb{R}$. The objective of most existing RL algorithms is to learn policy parameters $\theta$ that maximize the expected reward, $J(\pi_\theta) = \mathbb{E}_{x \sim \mathcal{D},\, y \sim \pi_\theta(\cdot|x)}[r(x,y)]$.
Gradient-based methods \citep{reinforce} optimize this objective by exploiting the identity between $\nabla_\theta J(\pi_\theta)$ and
\begin{equation}
\mathbb{E}_{x\sim\rho,y\sim\pi_\theta(\cdot|x)}\!\left[-A(x,y)\nabla_\theta \log \pi_\theta(y|x) \right],  
\label{eq:reinforce}
\end{equation}
where $A(x,y)$ is an advantage function, commonly approximated by $r(x,y) - b(x)$ with $b(x)$ serving as a baseline to reduce gradient variance.

\paragraph{Trust-Region Constrained Policy Optimization.} 
Direct gradient ascent on $J(\pi_\theta)$ can induce large policy updates, often resulting in unstable training. Trust-region methods mitigate this issue by constraining each policy update to remain close to a behavior (sampling) policy $\pi_{\text{old}}$. A canonical formulation underlying Trust Region Policy Optimization~\citep{trpo} and its practical variant Proximal Policy Optimization~\citep{ppo} is given by:
\begin{align}
\max_{\theta} & \mathbb{E}_{x\sim\rho,y \sim \pi_{\text{old}} (\cdot|x)}  \!\left[\frac{\pi_\theta(y|x)}{\pi_{\text{old}}(y|x)} \cdot A(x,y)\right] \label{eq:trpo}\\
\qquad & \text{s.t.} \quad 
\mathbb{E}_{x \sim \rho }\!\left[D_{\mathrm{KL}}\!\left(\pi_{\text{old}}(\cdot|x)\,\|\,\pi_\theta(\cdot|x)\right)\right] \leq \epsilon,    \nonumber
\end{align}
where $\epsilon > 0$ controls the maximum allowable deviation from the behavior policy via a KL-divergence constraint.

\vspace{0.2cm}
\noindent \textbf{Reinforcement Learning with Verifiable Reward (RLVR)} is a post-training paradigm in which rewards are assigned by deterministic, rule-based verifiers. Because such rewards are less susceptible to reward hacking, RLVR has proven particularly effective for reasoning-intensive tasks. Representative RLVR algorithms, including GRPO \citep{grpo} and RLOO \citep{rloo}, typically incorporate group-wise normalization or ranking-based estimators to reduce variance in advantage estimation.
Despite these modifications, the core objective of existing RLVR methods remains the maximization of expected advantage, as in Eq.~\eqref{eq:reinforce}. This objective often leads to a rapid collapse of exploratory diversity, which tends to concentrate probability mass on a small set of high-reward responses and suppress alternative valid reasoning paths~\citep{entropy-mechanism,invisibleleash}. Motivated by this limitation, we propose to shift the learning objective from advantage maximization to learning the distribution of advantages.

\section{Method}

{In this section, we propose \textbf{Learning Advantage Distribution (LAD)}, a framework that trains policies by matching the distribution induced by advantages rather than maximizing expected advantage. Building on the structure implied by trust-region policy optimization, we formulate LAD as minimizing an $f$-divergence between a policy-induced distribution and an advantage-induced distribution (Sec.~\ref{sec:lad-objective}). We further derive a practical surrogate loss that preserves the same optimal policy while enabling scalable training for LLMs (Sec.~\ref{sec:practical-imp}).}

\subsection{Learn Advantage Distribution with $f$-divergence}
\label{sec:lad-objective}

To align the policy with the distribution induced by advantage values, we first define two distributions: one derived from the policy $\pi_\theta(y|x)$ and the other from the advantage function $A(x,y)$. At optimality, these two distributions coincide. We state this relationship through the following lemma, which directly motivates our training objective.
\begin{lemma}[\textbf{Distribution equivalence}]
Let $\pi_\theta$ denote the current policy parameterized by $\theta$, and let $\pi_{\mathrm{old}}$ be the behavior model. Suppose that $\pi_\theta$ is optimized using a trust-region constrained reinforcement learning algorithm (e.g., PPO \citep{ppo}) as in Eq.~\ref{eq:trpo}. There exists a Lagrange multiplier $\eta>0$, such that the optimal policy $\pi_\theta^*$ satisfies
    \begin{equation}
        \frac{\frac{\pi_\theta^*(y|x)}{\pi_\mathrm{old}(y|x)}}{Z_\pi(x)} = \frac{e^{\frac{A(x,y)}{\eta}}}{Z_A(x)} ,
    \end{equation}
where $\eta$ is a Lagrangian multiplier, $Z_\pi(x) = \sum_y \frac{\pi_\theta^*(y|x)}{\pi_\mathrm{old}(y|x)}$ and $Z_A(x)=\sum_y e^{\frac{A(x,y)}{\eta}}$ are two normalizations.
\label{lemma:two-distribution}
\end{lemma}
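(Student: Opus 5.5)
We prove Lemma~\ref{lemma:two-distribution} by solving the trust-region problem in Eq.~\eqref{eq:trpo} pointwise in $x$ with a Lagrangian argument. Two harmless simplifications come first. The constraint is an expectation over $x\sim\rho$, but it is maximized separately for each prompt, so a single multiplier $\eta>0$ enforces it. Second, we optimize over the policy $\pi(\cdot|x)$ directly, treating $\pi_\theta^*$ as the optimum in the nonparametric class, and we add the normalization constraint $\sum_y \pi(y|x)=1$ with a multiplier $\lambda(x)$. We assume $\pi_{\mathrm{old}}(y|x)>0$ on the finite response space, so all ratios are defined.

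\textbf{Lagrangian and stationarity.} Following the standard derivation of the closed-form solution of a KL-regularized objective (as in REPS/AWR), the Lagrangian for fixed $x$ is
\begin{equation*}
\mathcal{L}(\pi)=\sum_y \pi(y|x)A(x,y)-\eta\Big(\sum_y \pi(y|x)\log\tfrac{\pi(y|x)}{\pi_{\mathrm{old}}(y|x)}-\epsilon\Big)-\lambda(x)\Big(\sum_y\pi(y|x)-1\Big).
\end{equation*}
Setting $\partial\mathcal{L}/\partial\pi(y|x)=0$ gives $A(x,y)-\eta(\log\frac{\pi}{\pi_{\mathrm{old}}}+1)-\lambda=0$, hence
\begin{equation*}
\frac{\pi^*(y|x)}{\pi_{\mathrm{old}}(y|x)}=C(x)\,e^{A(x,y)/\eta},
\end{equation*}
where $C(x)=e^{-1-\lambda(x)/\eta}$ does not depend on $y$.

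\textbf{Justifying the multiplier.} The objective is linear in $\pi$, the feasible set is convex and compact, and $\pi_{\mathrm{old}}$ is strictly feasible. Slater's condition therefore holds, and KKT gives a multiplier $\eta\ge 0$. If $A(\cdot)$ is non-constant in $y$, the unconstrained maximizer is a point mass, which violates the constraint for small $\epsilon$. The constraint is then active, so $\eta>0$ and the exponential form above is valid. The constant-advantage case is degenerate: there $\pi^*=\pi_{\mathrm{old}}$ is optimal, and any $\eta>0$ works because both sides of the lemma's identity are then uniform.

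\textbf{Removing $C(x)$.} Summing the stationarity relation over $y$ gives $Z_\pi(x)=C(x)Z_A(x)$. Dividing the pointwise relation by this identity cancels $C(x)$ and yields the claimed equality.

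\textbf{Main obstacle: which KL direction.} The step I expect to need the most care is that Eq.~\eqref{eq:trpo} uses $D_{\mathrm{KL}}(\pi_{\mathrm{old}}\|\pi_\theta)$, whereas the clean exponential form arises from $D_{\mathrm{KL}}(\pi_\theta\|\pi_{\mathrm{old}})$. With the stated direction, stationarity reads $A-\eta(-\pi_{\mathrm{old}}/\pi)-\lambda=0$ and gives $\pi/\pi_{\mathrm{old}}\propto 1/(\lambda-A)$, which is not an exponential. I would handle this by noting that, to second order near $\pi_{\mathrm{old}}$, the two KL directions agree because both reduce to the same Fisher quadratic form. Within the trust region the problem can therefore be replaced by its reverse-KL counterpart, as is standard in the PPO/REPS literature. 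The paper's "optimal policy" should then be read as the optimum of that reverse-KL-constrained problem, and the rest of the argument goes through unchanged.
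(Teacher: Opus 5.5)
Your proposal is correct and follows the paper's proof: the paper also forms a per-prompt Lagrangian with the KL term written as $D_{\mathrm{KL}}(\pi_\theta\Vert\pi_{\mathrm{old}})$ plus a normalization multiplier, reads off $\log\frac{\pi_\theta}{\pi_{\mathrm{old}}}=\frac{A+\lambda-\eta}{\eta}$ from stationarity, and normalizes (your cancellation of $C(x)$ is its ``some algebra''); your Slater/KKT argument for $\eta>0$ and your treatment of the constant-advantage case are extra rigor the paper omits. Your KL-direction remark is accurate, and the paper makes the same switch without comment (its proof uses the reverse KL even though Eq.~\eqref{eq:trpo} states the forward one), but the second-order Fisher argument only gives approximate agreement: under the literal forward-KL constraint the optimizer is $\pi^*\propto\pi_{\mathrm{old}}/(\lambda-A)$, so the exact identity rests entirely on your final reverse-KL reading.
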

Lemma~\ref{lemma:two-distribution} implies that trust-region constrained optimization implicitly induces a policy-improvement probability distribution proportional to the exponentiated advantage. 
{Rather than treating this form as a theoretical result, we make it explicit and treat it as a learning target. This leads to a shift in perspective: policy optimization can be viewed as learning to match an advantage-induced distribution, rather than maximizing expected advantage.}

Formally, we define two distributions below:
\begin{align}
    \mathcal{P_{\pi_\theta}}(y|x) &= \frac{\pi_\theta (y|x)}{\pi_\mathrm{old}(y|x)Z_{\pi}(x)} ,\label{eq:pi}\\
     \mathcal{P_A}(y|x) &= \frac{e^\frac{A(x,y)}\eta}{Z_{A}(x)}.\label{eq:a}
\end{align}
Intuitively, $\mathcal{P_A}$ is the target advantage distribution, encouraging higher probability to higher-advantage responses; $\mathcal{P_{\pi_\theta}}$ is the policy-induced distribution measuring how much the policy deviates from the behavior policy. The constructions of $\mathcal{P_A}$ and $\mathcal{P_{\pi_\theta}}$ are directly inspired by the optimal policy form arising from trust-region constrained policy optimization. This connection ensures that the proposed distributions are theoretically consistent with established policy improvement principles.  

Given these two equivalent distributions, we formulate the learning objective by minimizing their divergence. To this end, we recall the definition of the $f$-divergence, which is a general class of discrepancy measures that subsumes many commonly used divergences (e.g., KL, Jensen-Shannon, Hellinger etc.).
\begin{definition}[$f$-divergence]
Let $\mathcal{P}$ and $\mathcal{Q}$ are two distributions, then for a convex function $f:[0,+\infty)\rightarrow (-\infty,+\infty)$ such that $f(x)$ is finite for all $x>0$, and $f(1)=0$, the $f$-divergence of $\mathcal{P}$ from $\mathcal{Q}$ is defined as
\begin{equation}
    D_f(\mathcal{P}\Vert\mathcal{Q}) = \int_{\Omega} f(\frac{\mathrm{d} \mathcal{P}}{\mathrm{d} \mathcal{Q}}) \mathrm{d} \mathcal{Q}.
\end{equation}
\label{def:f-div}
\end{definition}
\paragraph{LAD: Minimizing the divergence between $\mathcal{P_{\pi_{\theta}}}$ and $\mathcal{P_A}$.} Using this measure of distributional discrepancy, we propose to train the policy by minimizing the $f$-divergence between $\mathcal{P_{\pi_{\theta}}}$ and $\mathcal{P_A}$, thereby encouraging the policy to learn the advantage distribution. This divergence-minimization objective constitutes the central novelty of LAD:
\begin{align}
\mathcal{L}_\mathrm{LAD}^\mathrm{theorem}& = \mathbb{E}_{x\sim\rho} \sum_y \frac{e^{\frac{A(x,y)}{\eta}}}{Z_A(x)} f(\frac{\frac{\pi_\theta(y|x)}{\pi_\mathrm{old}(y|x)}}{e^\frac{A(x,y)}{\eta}}\cdot\frac{Z_A(x)}{Z_{\pi_\theta}(x)}).
     \label{eq:theoretical}
\end{align}

Importantly, minimizing the LAD objective preserves the same optimal policy as standard trust-region policy optimization, namely $\pi^* \propto \pi_\mathrm{old} \cdot e^{\frac{A(x,y)}{\eta}}$.
While the resulting theoretical formulation is principled, it involves normalization terms $Z_A(x)$ and $Z_{\pi_\theta}(x)$ that are intractable for large action spaces such as those encountered in large language models.
To address this challenge, we introduce a practical surrogate objective in the next section that is computationally tractable while preserving the same optimal policy.

\subsection{Practical Implementations}
\label{sec:practical-imp}
Instead of approximating the intractable normalization constants directly, our key insight is  that the optimal policy of an $f$-divergence objective depends only on the relative structure of its weighting functions. This  allows us to replace the theoretical LAD objective with a surrogate loss that is tractable yet induces the same optimal policy.
To facilitate this transition, we introduce the following lemma.
\begin{lemma}[{Linking loss formulations to their induced optimal policies.}]
   Let $\mathcal{L} = \mathbb{E}_{x\sim\rho} \sum_y g_2(x,y) f(\frac{p_\theta(y|x)}{ g_1(x,y)})$, where $p_\theta(\cdot)$ is a probability measure with $\sum_y p(\pi)=1$. $g_1(x,y) , g_2(x,y)$ are functions independent of $p_\theta$. If the ratio $\frac{g_2(x,y)}{g_1(x,y)} = Z_g(x)$ depends only on $x$ and is independent of $y$, then any optimal solution \(p_\theta^\ast\) of \(\mathcal{L}\) satisfies $p^*_\theta(y|x) \propto g_1(x,y)$.
   \label{lemma:equivalence}
\end{lemma}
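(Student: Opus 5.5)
The plan is to reduce the loss, prompt by prompt, to an $f$-divergence, and then use the fact that an $f$-divergence is minimized when its two arguments coincide.

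\textbf{Step 1: factor the loss.} Since $\rho$ weights each prompt nonnegatively and $p_\theta(\cdot|x)$ is a separate simplex variable for each $x$, it suffices to fix $x$ and minimize $\ell_x(p)=\sum_y g_2(x,y) f\big(p(y)/g_1(x,y)\big)$ over probability vectors $p$. By hypothesis $g_2(x,y)=Z_g(x)\,g_1(x,y)$. I will assume $g_1>0$ and $Z_g(x)>0$, as in the intended application where $g_1=\pi_{\mathrm{old}}e^{A/\eta}$-type weights. Set $S(x)=\sum_y g_1(x,y)$ and $q(y)=g_1(x,y)/S(x)$, which is a probability distribution. Then
\begin{equation*}
\ell_x(p)=Z_g(x)\,S(x)\sum_y q(y)\, f\Big(\frac{p(y)}{S(x)\,q(y)}\Big).
\end{equation*}
This is, up to a positive constant, $\sum_y q(y)\,\tilde f\big(p(y)/q(y)\big)$ with $\tilde f(t)=f(t/S(x))$, which is convex.

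\textbf{Step 2: Jensen.} By Jensen's inequality with weights $q$,
\begin{equation*}
\sum_y q(y)\,\tilde f\Big(\frac{p(y)}{q(y)}\Big)\ \ge\ \tilde f\Big(\sum_y p(y)\Big)=\tilde f(1)=f\big(1/S(x)\big).
\end{equation*}
Equality holds when $p(y)/q(y)$ is constant on the support of $q$. Because $\sum_y p(y)=\sum_y q(y)=1$, that constant is $1$, so $p=q$. Hence $p=q\propto g_1(x,\cdot)$ attains the minimum of $\ell_x$.

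\textbf{Step 3: uniqueness (main obstacle).} The step I expect to be the main obstacle is showing that every optimum has this form. Jensen's equality case forces $p/q$ to be constant only when $\tilde f$ is strictly convex on the relevant range. If $f$ is merely convex, for example affine on an interval around $1/S(x)$, other minimizers can exist. I would try the following first. If $f$ is strictly convex at the point $1/S(x)$, meaning it is not affine on any neighborhood of that point, then any $p$ with nonconstant ratio spreads the values $p(y)/(S q(y))$ around their weighted mean $1/S(x)$, and strict convexity there makes the Jensen inequality strict. So I will state that strict convexity is required, as it holds for KL, Hellinger, JS and the other divergences the paper uses, and prove the claim under it. This gives $p^*_\theta(y|x)\propto g_1(x,y)$.

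A side remark: the condition $f(1)=0$ is not needed. Only convexity and the proportionality $g_2\propto g_1$ in $y$ matter. This is exactly what lets the normalizers $Z_A, Z_{\pi_\theta}$ be dropped when passing to the practical surrogate.
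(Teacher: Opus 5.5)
Your proof is correct under the positivity and strict-convexity hypotheses you add, but it takes a different route from the paper.

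\textbf{The paper's route.} The paper forms a Lagrangian for the simplex constraint and writes the stationarity condition $\frac{g_2(x,y)}{g_1(x,y)}f'\big(p(y|x)/g_1(x,y)\big)=\lambda$. Since the prefactor is $Z_g(x)$, $f'(p/g_1)$ is constant in $y$, and the paper then invokes ``$f'$ is increasing'' to conclude that $p/g_1$ is constant. This is a first-order necessary-condition argument. It aims directly at ``every optimum has this form,'' but it silently needs three things: $f$ differentiable, the optimum in the interior of the simplex (otherwise KKT multipliers for $p\ge 0$ enter), and $f'$ \emph{strictly} increasing. The last is strict convexity, which the statement never assumes. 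Stationarity alone also does not show the point is a minimum rather than a maximum, since the sign of $Z_g(x)$ matters.

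\textbf{Your route.} Your Jensen argument is global. It gives the explicit lower bound $Z_g(x)S(x)f(1/S(x))$ and shows that $q=g_1/S$ attains it, with no differentiability or interiority assumptions. So the sufficiency direction comes for free. For necessity you, like the paper, need an extra hypothesis, but yours is sharper and it is valid. With positive weights and non-constant ratios $t_y=p(y)/g_1(x,y)$, the weighted mean $1/S(x)$ lies strictly inside $[\min_y t_y,\max_y t_y]$. Equality in Jensen then forces $f$ to coincide with its supporting line at $1/S(x)$ on that whole interval. That is excluded if $f$ is not affine on any neighborhood of $1/S(x)$.

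\textbf{A factual slip.} The condition does not hold for ``the other divergences the paper uses.'' KL, reverse KL, Jeffreys, Hellinger and JS all have $f''>0$, so they are covered. Total variation, $f(t)=|t-1|$, is affine on $[0,1]$ and on $[1,\infty)$. For the practical surrogate, $S(x)=\sum_y\pi_{\mathrm{old}}(y|x)e^{A(x,y)/\eta}$ is generally not $1$. When $S(x)>1$ the per-prompt loss is $\sum_y|p(y)-g_1(x,y)|\ge S(x)-1$. Equality holds for every $p$ in the simplex with $p\le g_1(x,\cdot)$ pointwise. So the minimizer is not unique, and the lemma's ``any optimal solution'' claim genuinely fails for the TV surrogate. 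It still holds for the theoretical objective, where $S=1$. Your formulation exposes this limitation; the paper's proof hides it behind the unjustified strict monotonicity of $f'$.
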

\paragraph{Interpretation.} Applying Lemma~\ref{lemma:equivalence} to the theoretical LAD objective immediately yields the optimal policy $\pi_\theta^*\propto \pi_\mathrm{old} \cdot e^\frac{A(x,y)}\eta$.
Crucially, the lemma also implies that this optimal policy can be recovered by alternative loss formulations, provided that the induced ratio between the weighting functions remains invariant with respect to $y$. In particular, when setting $p_\theta = \pi_\theta$ and choosing $g_1(x,y)= g_2(x,y) = \pi_\mathrm{old} \cdot e^\frac{A(x,y)}\eta$, the resulting objective admits the same optimal policy, $\pi_\theta^*\propto \pi_\mathrm{old} \cdot e^\frac{A(x,y)}\eta$, while avoiding the intractable normalization terms present in the theoretical LAD loss.  This observation enables transforming the theoretical LAD objective into a practical surrogate that is computationally tractable while preserving the same optimal policy. 

\begin{tcolorbox}[width=\linewidth, colback=white!95!black, left=1mm, right=1mm, top=0.5mm, bottom=0.5mm]
\textbf{LAD practical objective:}
\vspace{-0.3cm}
\begin{equation}
    \mathcal{L}_\mathrm{LAD} = \mathbb{E}_{x\sim\rho,y\sim\pi_\mathrm{old}(\cdot|x)}  {e^\frac{A(x,y)}{\eta}} f(\frac{\frac{\pi_\theta(y|x)}{\pi_\mathrm{old}(y|x)}}{e^\frac{A(x,y)}\eta}).
    \label{eq:practical}
\end{equation}
This practical objective admits the same optimal policy as the theoretical loss $\mathcal{L}_\mathrm{LAD}^\mathrm{theorem}$, while exhibiting favorable optimization behavior and closely approximating it in practice.
\end{tcolorbox}

\subsection{Analyses and Discussions}
\label{sec:analyses-discussion}
\paragraph{$\mathcal{L}_\mathrm{LAD}$ yields desirable loss behavior with implicit regularization.}
The gradient of the practical LAD loss with respect to the policy parameters $\nabla_\theta \mathcal{L}_\mathrm{LAD}$ can be written as, 
\begin{align}
\mathbb{E}_{x\sim\rho,y\sim\pi_\mathrm{\theta}(\cdot|x)} f'(\frac{\frac{\pi_\theta(y|x)}{\pi_\mathrm{old}(y|x)}}{e^\frac{A(x,y)}\eta}) \cdot \nabla_\theta \log \pi_\theta (y|x). 
\end{align}
In contrast to the vanilla policy gradient in Eq.~\eqref{eq:reinforce}, which increases the log-likelihood of all actions with positive advantage, the weighting term in the LAD gradient, $f'(\frac{\pi_\theta(y|x)}{\pi_\mathrm{old}(y|x)}\cdot e^\frac{-A(x,y)}\eta)$ depends jointly on the advantage $A(x,y)$ and the likelihood ratio $\frac{\pi_\theta(y|x)}{\pi_\mathrm{old}(y|x)}$. As a result, even when a trajectory $(x,y)$ has a large positive advantage, the LAD gradient naturally suppresses further increases in $\log \pi_\theta(y|x)$ once the likelihood ratio becomes sufficiently large. This adaptive weighting prevents uncontrolled amplification of action probabilities and induces an \emph{implicit regularization} effect, leading to more stable optimization and improved diversity preservation.
Unlike PPO-style clipping as detailed in Appendix~\ref{sec:apd:gradient-comparison}, which enforces hard constraints on likelihood ratios, LAD induces a soft, advantage-aware suppression mechanism that emerges naturally from distribution matching.

\paragraph{$\mathcal{L}_\mathrm{LAD}$ is a close surrogate to the theoretical LAD objective $\mathcal{L}_\mathrm{LAD}^\mathrm{theorem}$.}
Beyond inducing desirable gradient behavior, the practical LAD objective also closely approximates the theoretical LAD formulation, thereby serving as a principled and reliable surrogate. A formal bound is given in Theorem~\ref{thm:surrogate-formal}, and additional discussion and empirical validation are provided in Appendix~\ref{sec:apd:theorem}.
Finally, we summarize several representative choices of divergence functions within the LAD framework in Table~\ref{tab:lad-losses}.

\begin{figure}[t!]
    \centering
    \includegraphics[width=0.49\linewidth]{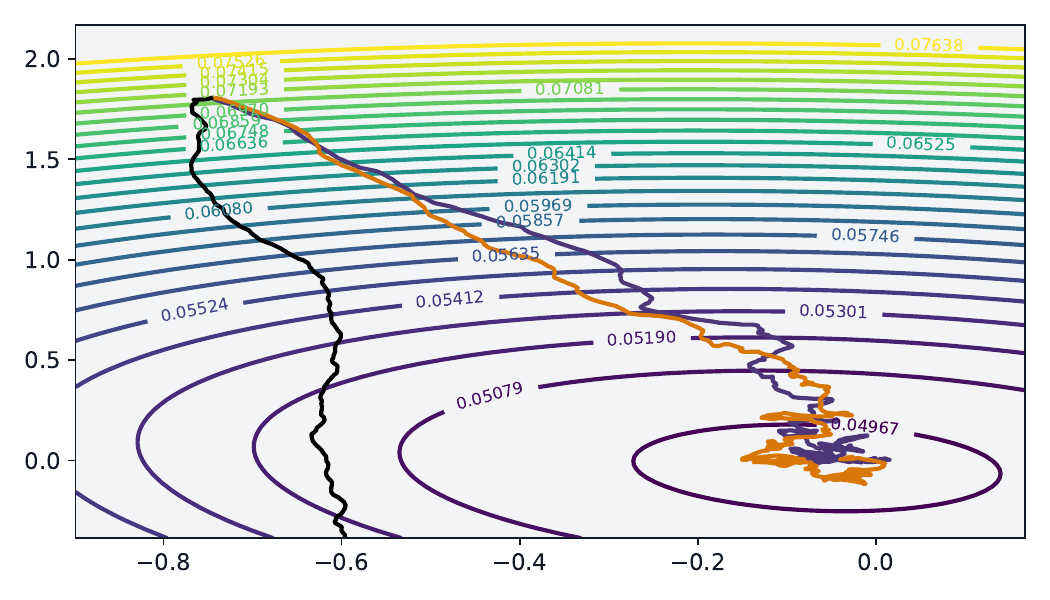}
    \vspace{-0.5em}
    \includegraphics[width=0.49\linewidth]{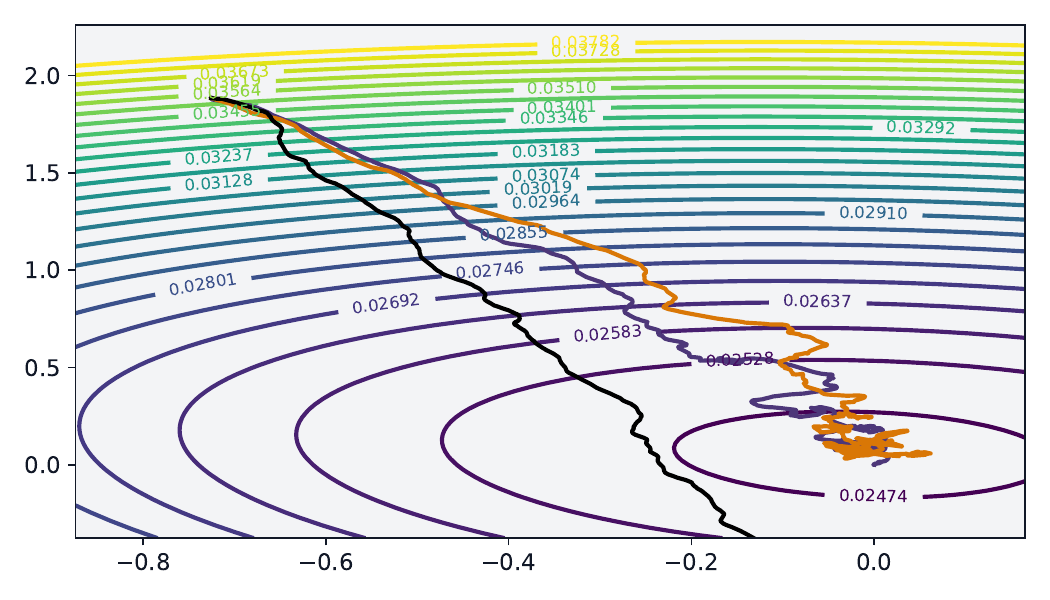}
    \vspace{-1em}
    \caption{Learning dynamics of GRPO, \textcolor{orange}{the theoretical LAD loss}, and \textcolor{violet}{the practical LAD loss}. The loss landscape is constructed with the practical LAD loss. Top:  LAD loss under Hellinger distance. Bottom: LAD loss under Jensen–Shannon divergence.}
    \label{fig:losslandscape}
\end{figure}

\section{Experiments}

We empirically evaluate LAD to answer three key questions: (i) Does the LAD objective successfully learn advantage distributions in controlled settings? (ii) Does this translate into improvements on large-scale reasoning tasks? (iii) How do different divergence choices and hyperparameters influence performance? To this end, we begin with a controlled distribution-matching task, followed by extensive LLM experiments, and conclude with a set of ablation studies.
\subsection{Controlled Experiments}

We begin with a controlled bandit environment to visualize the behavior of different objectives. This setting \emph{allows us to directly compare the learned policy distribution against the ground-truth advantage distribution}, revealing whether an objective genuinely performs distribution matching.

\paragraph{Settings.}
We design a 50-bandit problem. The policy model is parameterized by a $1\times50$ vector, where each entry represents the probability of selecting a corresponding arm. The policy model $\pi_\theta$ and the behavior model $\pi_\mathrm{old}$ are initialized to the uniform distribution. The underlying advantage distribution is constructed as a mixture of three Gaussian distributions, resulting in three distinct modes. Training is performed for 4000 steps. At each step, the behavior policy $\pi_\mathrm{old}$ is sampled 32 times with a sampling temperature of 1.5. The behavior policy is updated twice throughout training. We use a learning rate of 5e-3. For LAD loss variants, $\eta$ is set to $1$. For GRPO, we additionally apply a KL-divergence regularization between $\pi$ and $\pi_\mathrm{old}$ with weight $1$. Under this setting, the theoretical optimal policy of GRPO coincides with that of the LAD objective, enabling a fair comparison.

\paragraph{LAD loss successfully matches the two distributions.}
Figure~\ref{fig:toy-dist} visualizes the final learned policy distributions alongside the ground-truth advantage distribution. The advantage distribution is shown in orange, while the learned policy distribution is shown in blue. The LAD objective $\mathcal{L}_\mathrm{LAD}$ enables the policy-induced distribution $\mathcal{P}_{\pi_\theta}$ to closely match the target advantage distribution $\mathcal{P}_{A}$, faithfully recovering all three modes. 
In contrast, the advantage-maximization objective of GRPO concentrates probability mass on a single arm. This leads to a collapse of the policy distribution and a substantial reduction in generative diversity during training, ultimately resulting in suboptimal performance on the less preferred arm.

\renewcommand{\arraystretch}{0.9}
\setlength{\tabcolsep}{9pt}
\begin{table*}[t!]
    \centering
      \caption{Performance of different losses on math reasoning benchmarks. The best two results under Avg@32 are highlighted in \textbf{bold} and \underline{underlined}.}
    \label{tab:main-math}
    \vspace{-0.5em}
    \scalebox{0.9}{
    \begin{tabular}{c|ccccccc}
    \toprule
        Methods  & MATH & \makecell[c]{Olympiad\\Bench}& \makecell[c]{Minerva}& \makecell[c]{AIME\\2024}&\makecell[c]{AIME\\2025}& AMC& Average  \\
        \midrule
        {GRPO}{\footnotesize \citep{grpo}} &  75.57&39.47&36.32&14.03&4.86&53.44& 37.28\\
        EntAdv{\footnotesize \citep{entropy-in-adv}}& 76.60&\underline{41.16}&35.68&15.00&7.23&52.97&{38.11} \\
        KLCov{\footnotesize \citep{entropy-mechanism}}& 76.40&{40.13}& 36.28& 14.66& \textbf{9.81}& \underline{53.45}& \underline{38.46} \\
        ClipCov{\footnotesize \citep{entropy-mechanism}}& 76.03& 40.30&\textbf{37.58} & \underline{16.29} & 7.39 & 52.27&38.31 \\
       {FlowRL}{\footnotesize \citep{flowrl}} &\underline{77.17}&{40.80}&36.21&14.67&7.51&47.34& 37.28\\
       \midrule
        
        {LAD} &\textbf{77.66}&\textbf{41.25}&\underline{36.68}&\textbf{19.60}&\underline{8.84}&\textbf{56.45}& \textbf{40.08}\\
       
        \bottomrule
    \end{tabular}}
\end{table*}
\begin{table*}[]
    \centering
    \caption{Performance of different losses on code reasoning benchmarks. The best two results under a specific metric are highlighted in \textbf{bold} and \underline{underlined}.}
    \label{tab:code-reasoning}
    \vspace{-0.5em}
    \scalebox{0.85}{
    \begin{tabular}{c|cc|cc|cc}
    \toprule
       \multirow{2}{*}{Method}  & \multicolumn{2}{c|}{LiveCodeBench} & \multicolumn{2}{|c|}{HumanEval+} & \multicolumn{2}{|c}{CodeForces}  \\
         &  Avg@16 & Pass@16 & Avg@16 & Pass@16 & Rating & Percentile \\
         \midrule
        GRPO{\footnotesize \citep{grpo}} & 32.46& 51.25& 80.90 & 93.25& 1355.35& 70.30 \% \\
        EntAdv{\footnotesize \citep{entropy-in-adv}}& 32.75 & 50.90& 81.05& 95.09 & 1382.78 & 72.20\% \\
        KLCov{\footnotesize \citep{entropy-mechanism}} & 29.50  &50.18 &80.75& 95.09&1373.07 & 71.60\% \\
        ClipCov {\footnotesize \citep{entropy-mechanism}}& 32.10& \textbf{51.97}& \underline{81.48}& 94.49& \underline{1473.09}& \underline{79.20\%} \\
        FLowRL {\footnotesize \citep{flowrl}}&\underline{33.24} & 51.61 & 81.29& 95.09& 1364.70 & 71.00\% \\
        \midrule
        LAD & \textbf{33.51} &\textbf{51.97}& \textbf{82.29} & \textbf{95.71} &  \textbf{1533.64}& \textbf{82.50\%} \\
        \bottomrule
    \end{tabular}}
    \vspace{-1em}
\end{table*}

\paragraph{LAD loss $\mathcal{L}_\mathrm{LAD}$ is an accurate surrogate of $\mathcal{L}_\mathrm{LAD}^\mathrm{theorem}$.}
We further examine the learning dynamics induced by $\mathcal{L}_\mathrm{LAD}$, $\mathcal{L}_\mathrm{LAD}^\mathrm{theorem}$ and GRPO.
The corresponding trajectories are visualized in Figure~\ref{fig:losslandscape}. To construct the loss landscape, we use the practical LAD objective and project the model parameters from each training step onto this landscape.
As shown in Figure~\ref{fig:losslandscape}, the optimization trajectories induced by the practical loss and the theoretical loss closely align, exhibiting highly similar learning dynamics. In contrast, GRPO follows a markedly different trajectory, deviating from the geometry implied by the theoretical LAD objective. These results empirically confirm that the practical LAD loss serves as a faithful and accurate surrogate for the theoretical formulation.

\paragraph{Comparison with FlowRL.} 
Concurrent with our work, FlowRL \citep{flowrl} proposes to move beyond reward maximization by adopting a reward-matching objective derived from the trajectory balance loss in GFlowNets \citep{gflownet}. While this formulation departs from conventional reward-maximization objectives, Figure~\ref{fig:toy-dist} (middle) shows that FlowRL fails to fully align the advantage-induced distribution $\mathcal{P_A}$ and the policy-induced distribution $\mathcal{P_{\pi_\theta}}$. 
As detailed in Appendix~\ref{sec:specialcase}, {FlowRL can in fact be cast as a special case of the LAD framework, but only under substantially stricter conditions} on policy updates and parameter scaling.
In contrast, LAD provides a more general and theoretically grounded framework that directly learns the advantage distribution through principled $f$-divergence minimization, thus allowing LAD to more faithfully capture the full structure of the advantage distribution.

\begin{table*}[t!]
    \centering
    \caption{Performance of LAD and FlowRL on different LLM backbones. The best result is highlighted in \textbf{bold}.}
    \vspace{-0.5em}
    \label{tab:backbone}
    \scalebox{0.88}{
    \begin{tabular}{c|ccccccc}
    \toprule
      Methods & MATH & \makecell[c]{Olympiad\\Bench}& \makecell[c]{Minerva}& \makecell[c]{AIME\\2024}&\makecell[c]{AIME\\2025}& AMC & Average \\
      \midrule
      \rowcolor{gray!30!white}  \multicolumn{8}{c}{\textcolor{violet}{\texttt{OpenMath-Nemotron-1.5B: Avg@32}}} \\
      FlowRL & 80.43& 45.81& 21.02& 21.36& 22.66& 55.47 & 41.13  \\

      LAD & \textbf{86.81}& \textbf{55.63}& \textbf{22.98}& \textbf{32.70}& \textbf{26.47}& \textbf{68.71} & \textbf{48.88}  \\
    \rowcolor{gray!30!white} \multicolumn{8}{c}{\textcolor{violet}{\texttt{OpenMath-Nemotron-1.5B: Pass@32}}}\\
      FlowRL& 94.60& 65.93& 38.24& 56.35& 37.60& 80.65 & 66.98 \\
      LAD&  \textbf{97.20} & \textbf{78.52}& \textbf{42.28}& \textbf{72.29}& \textbf{54.38}& \textbf{89.08 }& \textbf{72.29}\\ 
     \midrule
    \rowcolor{gray!30!white}  \multicolumn{8}{c}{\textcolor{violet}{\texttt{Thinkless-DeepScaleR-1.5B: Avg@32}}} \\
      FlowRL & 80.90& 42.71& \textbf{34.11}& 19.41& 16.90& 54.80& 41.47\\
      LAD & \textbf{83.01}& \textbf{46.08}& 33.71& \textbf{21.85}& \textbf{20.13}& \textbf{59.38}& \textbf{44.03 } \\

      \rowcolor{gray!30!white}  \multicolumn{8}{c}{\textcolor{violet}{\texttt{Thinkless-DeepScaleR-1.5B: Pass@32}}} \\
      FlowRL & 96.40& 69.19& 56.62& 51.56& \textbf{43.64}& 84.79& 67.03   \\
      LAD & \textbf{95.60}& \textbf{69.93}& \textbf{60.66}& \textbf{55.52}& 42.60& \textbf{86.62}& \textbf{68.49} \\
        
        \bottomrule
    \end{tabular}}
    \vspace{-1em}
\end{table*}

\subsection{LLM Experiments}

Having verified that LAD recovers the target distribution in a toy setting, we now test whether these benefits translate to real-world LLM tasks.

\paragraph{Experimental settings.}
We experiment on both the math and code domains. For the math domain, we use Qwen2.5-7B~\citep{qwen2.5} as our LLM backbone and DAPO-MATH~\citep{dapo} as our training dataset. The experimental protocol follows \citet{entropy-mechanism} with a dynamic sampling strategy and the clip-higher technique \citep{dapo}. Evaluation is conducted on a diverse set of mathematical reasoning benchmarks, including MATH500 \citep{prm800k}, AIME 2024, AIME 2025 \citep{aime}, AMC \citep{aime}, OlympiadBench \citep{olympiadbench}, and Minerva \citep{minerva}. For the code domain, we use DeepSeek-R1-Distill-7B as our LLM backbone and DeepCoder \citep{deepcoder} as our training dataset. We follow the experimental protocol of \citep{flowrl}, and evaluate the performance on LiveCodeBench \citep{livecodebench}, CodeForces \citep{codeforces}, and HumanEval+ \citep{humaneval}. For both domains, we employ GRPO as the advantage estimator and use Jensen-Shannon divergence to implement LAD loss. The learning rate is set to 1e-6. More implementation details are in Appendix~\ref{sec:apd:implement}.

\begin{table}[t]
    \caption{Generative diversity results of on rollouts of AIME 24/25 problems. The best two results under a specific metric are highlighted in \textbf{bold} and \underline{underlined}.}
    \label{tab:diversity}
    \centering
    \scalebox{0.82}{
    \begin{tabular}{c|cc|c}
    \toprule
        Methods & dist-3($\uparrow$) & dist-4($\uparrow$) & GPT4-Judge($\uparrow$)  \\
        \midrule
        GRPO &0.2306& 0.2902 &2.04\\
        EntAdv& 0.2978& 0.3782& 2.07\\
        KLCov& 0.2471&0.3262& 2.07\\
        ClipCov&\underline{0.3315}& \underline{0.4171}&2.10\\
        FLowRL & 0.2878&0.3654 & \underline{2.38}\\
        LAD &\textbf{0.3498}& \textbf{0.4442}&\textbf{2.58} \\
        \bottomrule
    \end{tabular}}
    \vspace{-1em}
\end{table}

\paragraph{Baselines.} 
We compare against the latest methods designed to mitigate entropy collapse and improve generative diversity, including EntAdv, KLCov, and ClipCov \citep{entropy-mechanism}, as well as FlowRL \citep{flowrl}. EntAdv, KLCov, and ClipCov are regulation-based approaches that introduce additional regularization terms into the loss function; however, they retain expected advantage maximization as the primary optimization objective. In contrast, FlowRL represents the most recent departure from advantage maximization by explicitly learning a reward-matching objective.

\paragraph{LAD leads to superior reasoning performance.}
Table~\ref{tab:main-math} reports the performance of different training objectives on a suite of mathematical reasoning benchmarks. Across nearly all benchmarks, LAD consistently outperforms prior methods, achieving the best average performance under both evaluation protocols. In particular, LAD attains the highest Avg@32 score overall (40.08), with substantial gains on challenging benchmarks such as AIME 2024 (+3.31 over the strongest baseline) and AMC (+3). Performance under the Pass@32 metric is also reported in Appendix~\ref{sec:apd:exp-results}, where LAD achieves a new state-of-the-art average score of 65.76, outperforming FlowRL by 2.63 points and regulation-based methods by an even larger margin. A similar performance advantage is observed on code reasoning tasks (Table~\ref{tab:code-reasoning}), where LAD achieves the best results on LiveCodeBench and HumanEval+ and the highest CodeForces percentile (82.50\%), indicating not only improved reasoning accuracy but also superior generalization to real-world programming problem difficulty. Taken together, these results demonstrate that LAD is broadly effective at improving both mathematical and code reasoning across diverse benchmarks and evaluation settings.

\vspace{-0.5em}
\paragraph{LAD boosts generative diversity.}
To evaluate the effectiveness of LAD in enhancing response diversity, we follow \citet{flowrl} and assess logical diversity using an LLM-as-a-Judge framework with GPT-4-Turbo on AIME 2024/2025 as detailed in Appendix~\ref{sec:apd:implement}. In addition to logical diversity, we measure lexical diversity using standard automatic metrics, distinct-$n$ with $n$=3,4. 
As shown in Table~\ref{tab:diversity}, LAD attains the highest distinct-3 and distinct-4 scores, indicating substantially richer lexical variation than all baselines. Meanwhile, LAD achieves the highest GPT-4 Judge score among all methods, suggesting that a diverse logical pattern accompanies its increased diversity.

\paragraph{LAD can be extended to different LLM backbones.}
Besides the Qwen2.5-7B for math reasoning and DeepSeek-R1-Distill-7B for code reasoning, we further include two other LLM backbones, OpenMath-Nemotron-1.5B and Thinkless-DeepScaleR-1.5B. As shown in Table~\ref{tab:backbone}, LAD still outperforms FlowRL on both backbones.

\subsection{Further Studies}
\label{sec:exp-further-study}

\begin{figure}
    \centering
    \begin{minipage}{0.54\textwidth}
        \centering
        \includegraphics[width=0.48\linewidth]{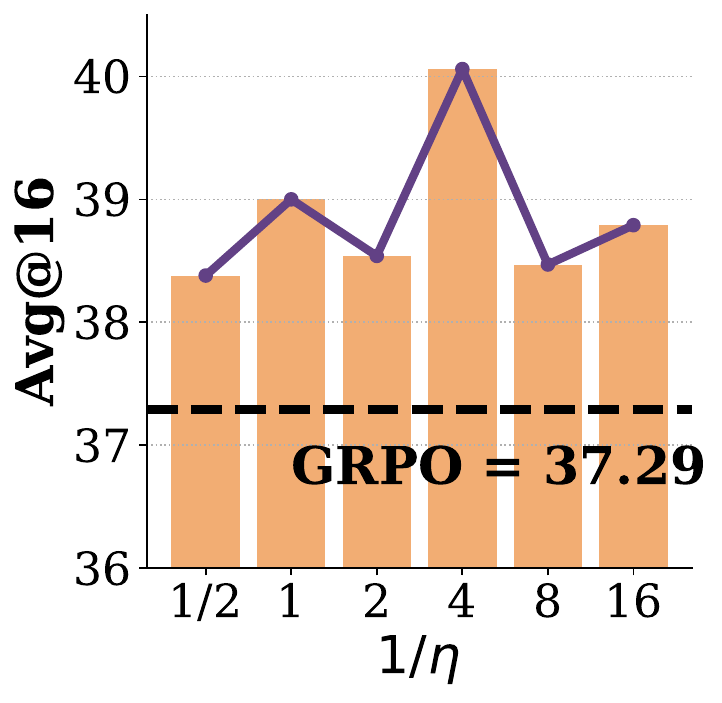}
        \includegraphics[width=0.48\linewidth]{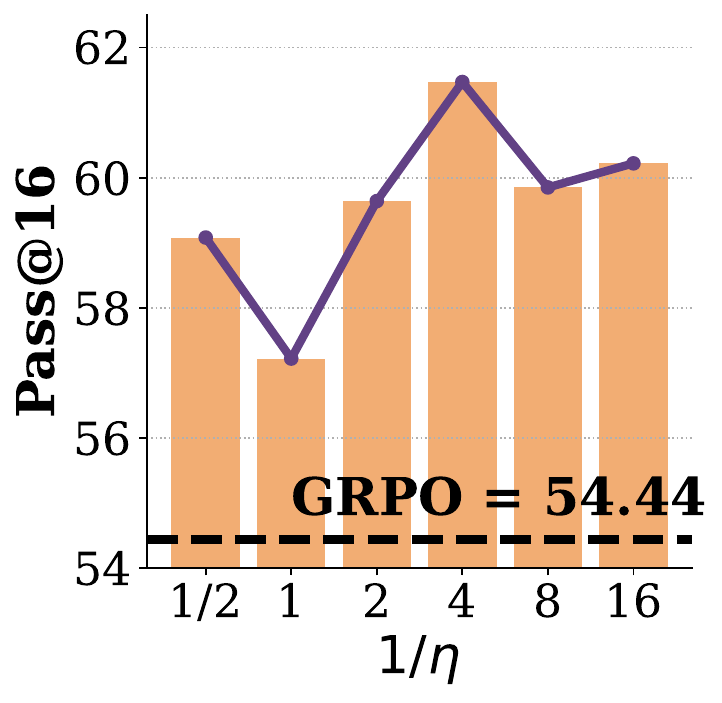}
        \caption{Average Avg@16 and Pass@16 scores over six mathematical benchmarks with different values of the hyperparameter $\eta$. }
        \label{fig:eta}
    \end{minipage}
    \hfill
    \begin{minipage}{0.45\textwidth}
        \includegraphics[width=\linewidth]{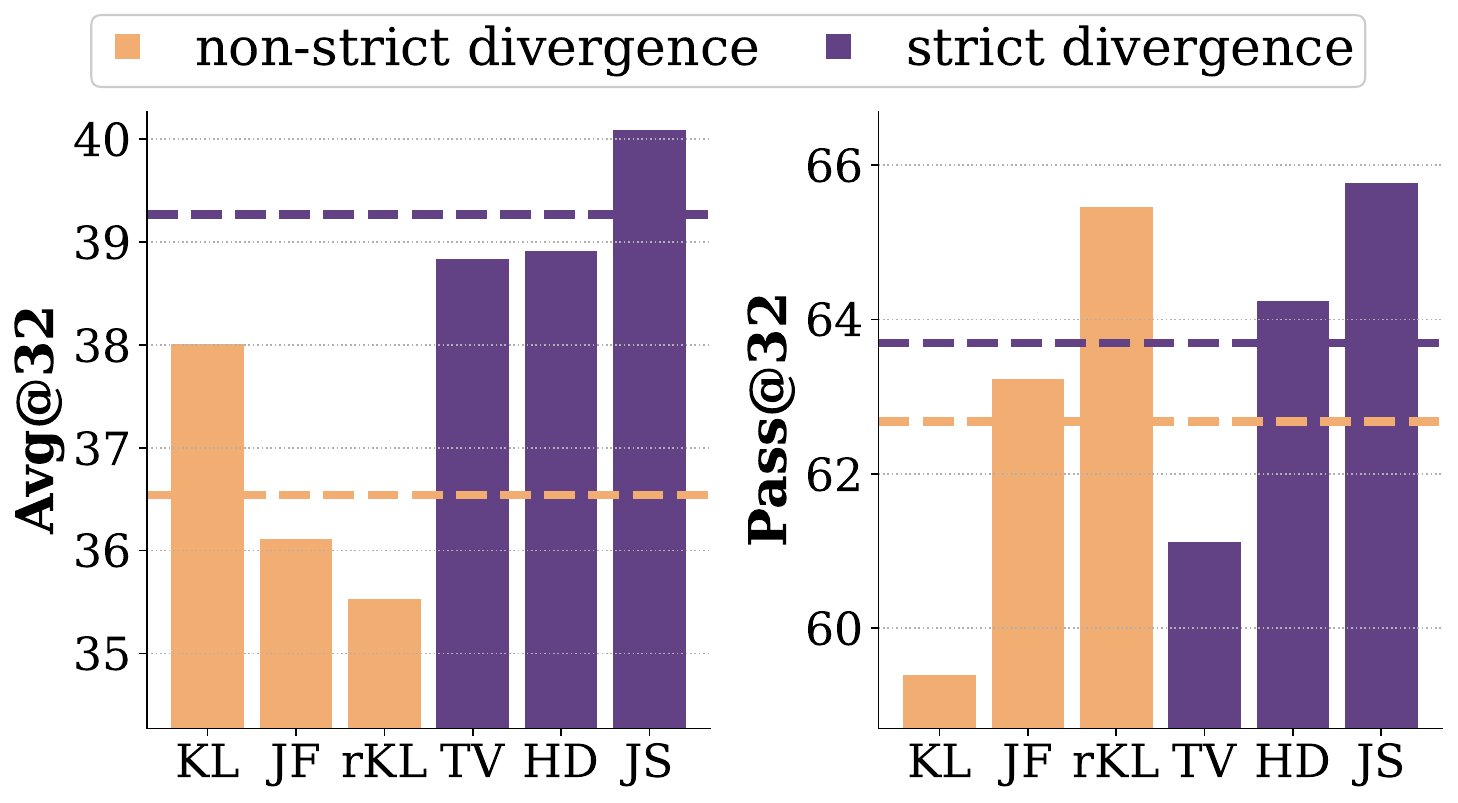}
        \vspace{-1.5em}
        \caption{The average math reasoning performance of LAD loss variants under different $f$-divergence classes. The dashed lines represent the average performances of the \textcolor{violet}{strict} and \textcolor{orange}{non-strict} divergence classes.}
        \label{fig:more-div}
    \end{minipage}
\end{figure}

\paragraph{Effect of divergence classes: strict divergences outperform weaker ones.} The math reasoning performances of all LAD variants are shown in Figure~\ref{fig:more-div}.
We comprehensively consider the KL divergence (KL), the reverse KL divergence (rKL), the Jeffreys divergence (JF), the total variation distance (TV), the Hellinger Distance (HD), and the Jensen-Shannon divergence (JS). Among all the divergence classes in our experiments, TV, HD, and JS are strict divergences that force exact matching at their minimum. On the contrary, KL, rKL, and JF are weaker divergences. Minimizing them does not necessarily imply distributional equality in practice.
Overall, strict divergences generally lead to higher overall performance because they penalize discrepancies symmetrically, thereby helping LAD learn the advantage distribution more faithfully, which in turn leads to improved reasoning outcomes.

\paragraph{Effect of $\eta$.}
To evaluate the sensitivity of LAD to $\eta$, we train models with six values of $\eta\in\{0.5,1,2,4,8,16\}$. Figure~\ref{fig:eta} reports the average Avg@16 and Pass@16 scores across six mathematical reasoning benchmarks.
These results suggest that LAD consistently outperforms GRPO under a broad range of $\eta$, indicating that its effectiveness is not critically dependent on precise tuning of $\eta$.

\begin{wrapfigure}{r}{0.5\textwidth}
    \centering    
    \includegraphics[width=0.98\linewidth]{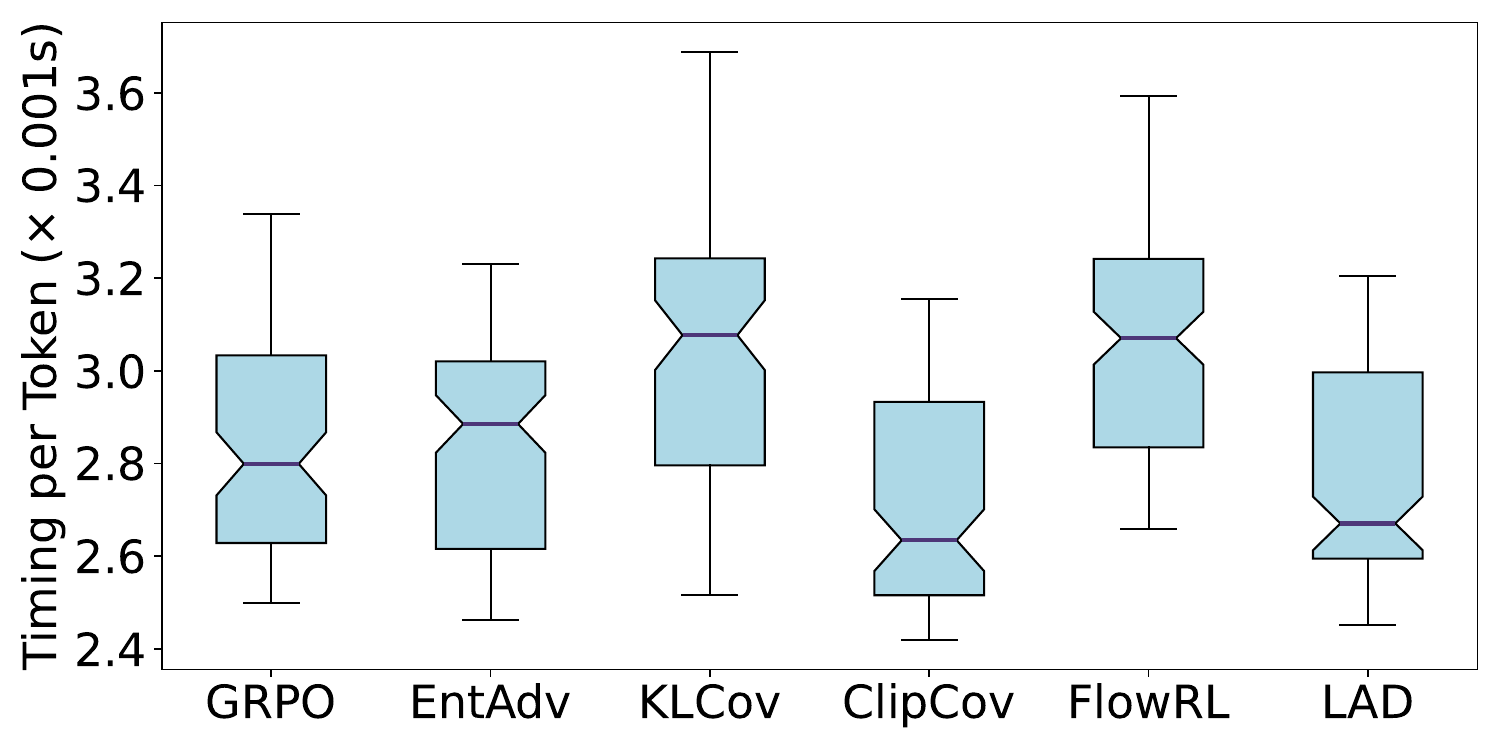}
    \caption{The distribution of GPU time per token throughout the training process.}
    \label{fig:placeholder}
\end{wrapfigure}

\paragraph{Computational cost of LAD is comparable to GRPO.}
Figure~\ref{fig:placeholder} compares the training costs of LAD with those of the baseline methods. FlowRL and KLCov incur the highest computational overhead, as both require additional evaluations of the reference model probabilities; moreover, FlowRL introduces an extra auxiliary module to approximate an otherwise intractable term in its objective. In contrast, LAD does not require any additional model components or extra forward passes. Its implementation involves only a minimal modification to the loss function, resulting in a training cost that is comparable to that of the vanilla GRPO baseline.

\section{Related Works}

Reinforcement learning has emerged as a major approach for LLM post-training \citep{qwen2.5,llama3,gemini}, which have been widely applied to various domains, including instruction following \citep{rlhf1, rlhf2} and human-value alignment \citep{dpo,ipo,simpo,iterative-dpo}. More recently, reinforcement learning has demonstrated substantial gains in enhancing the reasoning capabilities of LLMs \citep{pqm,freeprm, prime}. In this context, reinforcement learning with verifiable rewards (RLVR) has emerged as a particularly effective approach, owing to its robust and reliable reward signals \citep{rlvr-1,rlvr-2,rlvr-3,rlvr-4,rlvr-5,rlvr-6}. 
Despite these advances, most existing RL methods for LLMs continue to rely on advantage maximization as the core training objective, which inherently biases policies toward a small set of high-reward responses and limits generative diversity \citep{entropy-mechanism,diverse-vital}. 
Concurrent with our work, FlowRL \citep{flowrl} also moves beyond advantage maximization to learn reward-matching. However, as evidenced in Figure~\ref{fig:toy-dist} and Appendix~\ref{sec:specialcase}, FlowRL exhibits limited distribution matching behavior and can be interpreted as a special case of our LAD framework under stricter conditions.
Additional related work and detailed comparisons with traditional RL paradigms, including distributional RL, risk-sensitive RL and multi-objective RL, are provided in Appendix~\ref{sec:apd:related-work}.

\section{Conclusion}

Current RL objectives for large-model reasoning primarily maximize expected rewards, which encourages overfitting to dominant outcomes and suppresses viable alternative reasoning paths. We instead introduce Learning Advantage Distributions (LAD), a distribution-matching framework that optimizes policy updates by minimizing an $f$-divergence between the policy and the advantage-induced action distribution. This approach preserves multiple promising reasoning modes and avoids mode collapse without additional regularization. Experiments on multiple math and code reasoning benchmarks show that LAD consistently improves both accuracy and generative diversity over strong RLVR baselines.

\section*{Acknowledgement}
We thank Changdae Oh and Yu Wang for their valuable suggestions on the paper. This work is supported in part by the AFOSR Young Investigator Program under award number FA9550-23-1-0184, National Science Foundation under awards IIS-2237037 and IIS-2331669, Office of Naval Research under grant number N00014-23-1-2643, Schmidt Sciences Foundation, Open Philanthropy, Alfred P. Sloan Fellowship, and gifts from Google and Amazon. 

\section*{Limitations}
While LAD demonstrates promising improvements in performance and diversity, several limitations remain. First, our work focuses on post-training scenarios where responses can be evaluated with verifiable signals (e.g., mathematical or programmatic correctness). Although this setting is common in recent RL research for LLMs, extending LAD to domains without objective verifiers, such as open-ended dialogue or creative writing, may require additional mechanisms for reward signal quality. This represents an orthogonal challenge rather than a failure of the method itself.

Second, our empirical evaluations focus on mathematical and code reasoning benchmarks. These benchmarks are representative of RLVR-style settings but do not encompass all possible LLM capabilities. Broader evaluation across multimodal tasks, multilingual reasoning, and agentic decision-making is left for future work. Meanwhile, our evaluation is conducted on 1.5B- and 7B-scale models. These sizes are standard for research-stage RL studies, but we do not presume automatic transfer to 30B–70B+ or frontier-scale models. Scaling studies are left for future work and are expected to be influenced by factors such as sampling budgets, hyperparameters, and verifier reliability, rather than limitations of LAD itself.

Finally, although LAD provides theoretical justification, we do not claim formal convergence guarantees for all divergence classes or optimization schedules. This is a standard limitation shared with most large-model RL methods, and strengthening theoretical guarantees remains an interesting direction for future research rather than a missing requirement for practical validity.

\section*{Ethic Considerations}
This work focuses on reinforcement learning for reasoning tasks, where correctness can be automatically evaluated. As such, our experiments are conducted exclusively in domains with objective ground-truth verification to reduce risks associated with generating misleading or harmful content. 

Our approach of learning advantage distribution (LAD) introduced here operate purely at the optimization and distribution-learning level and do not involve new data sources, new modalities, or sensitive information. In deployment settings, LAD should be paired with standard safety practices for LLMs (e.g., alignment checks, content filtering, and supervised oversight), which are orthogonal to the contributions of this work.

\bibliography{custom}

\appendix

\section{Related Works}
\label{sec:apd:related-work}
\subsection{Traditional RL beyond Expected-Scalar Maximization}
Beyond the standard formulation of reinforcement learning (RL) as maximizing expected cumulative scalar reward or advantage, a substantial body of work has explored alternative objectives that encode richer behavioral preferences.

\paragraph{Distributional RL.} 
Distributional reinforcement learning (RL) \citep{distrl0,distrl-1,distrl-2,distrl-3,distrisk} departs from expectation-based value estimation by modeling the full distribution of returns and optimizing distributional objectives, thereby enabling finer-grained control over outcome profiles. Despite the similarity in terminology, distributional RL and learning the advantage distribution (LAD) address fundamentally different problems. In distributional RL, the learned distribution $Z^\pi(x,y)$ captures the stochasticity of returns for a fixed state–action pair, reflecting randomness induced by the environment and transition dynamics. The return distribution thus serves as an intermediate representation to improve value estimation, while policy optimization remains expectation-based through $\mathbb{E}_{\pi}[Z^\pi(x,y)]$. In contrast, LAD models the distribution over the entire action space induced by advantage values. This distribution appears directly in the learning objective, which minimizes a divergence of the form $\min_\pi \mathbb{D}_f(\mathcal{P_A} \Vert \mathcal{P_\pi})$, and therefore plays a central role in policy optimization rather than serving as an auxiliary estimator.

\paragraph{Risk-sensitive RL.} 
Risk-sensitive RL \citep{risk1,risk2,risk3,distrisk,risk4} extends the standard RL objective by optimizing criteria that reflect attitudes toward uncertainty rather than mean performance alone. In many classical formulations \citep{riskvar-1,riskvar-2,riskvar-3}, risk sensitivity is introduced by augmenting the expected return with an additional term that penalizes variability or tail risk. More recent approaches include variance-constrained actor–critic methods and distributional techniques that explicitly model the return distribution and optimize risk-aware criteria such as conditional value-at-risk (CVaR) within that framework \citep{cvar1,cvar2}. Nevertheless, similar to distributional RL, the distributions considered in risk-sensitive RL are induced by environmental stochasticity, and policy optimization typically retains an expectation-based form, albeit with a modified objective reflecting risk preferences.

\paragraph{Multi-objective RL.}
Multi-objective reinforcement learning (MORL) \citep{multirl0, multirl1, multirl2, multirl3,multirl4,multirl5} generalizes the standard RL framework by considering vector-valued reward functions and learning policies that optimize trade-offs among multiple, potentially conflicting objectives. Common solution concepts include learning Pareto-optimal policies~\citep{pareto1,pareto2}, approximating Pareto fronts \citep{paretoapp1}, or optimizing scalarized objectives defined by linear or nonlinear combinations of the individual reward components \citep{scalarmorl,scalarmorl2}. In these approaches, the multiple objectives are typically specified exogenously and correspond to distinct semantic criteria (e.g., performance, cost, safety), and policy optimization proceeds by reducing the vector-valued returns to a scalar objective through scalarization or preference weights. In contrast, learning the advantage distribution (LAD) does not assume multiple predefined reward dimensions. Instead, LAD induces a distribution over actions from scalar advantage values and directly matches this distribution in the learning objective. As a result, LAD modifies the policy objective through distributional alignment over actions rather than through explicit multi-criteria optimization or scalarization of vector-valued rewards.

\subsection{Exploration-exploitation trade-off in RL for LLMs}

The exploration–exploitation trade-off \citep{ee-trade} is a foundational challenge in reinforcement learning. In the context of large language models (LLMs), most existing work has focused on improving exploitation through increasingly sophisticated objectives. However, a growing body of research argues that insufficient exploration substantially limits the effectiveness of current RL-based fine-tuning methods for LLMs, motivating explicit mechanisms to promote exploration during training.

\paragraph{Optimistic exploration in RLHF.}
Several recent works \citep{xpo,onlinestrength-onpolicy} in reinforcement learning from human feedback (RLHF) emphasize the importance of deliberate exploration during policy optimization. These approaches introduce structured exploration mechanisms to mitigate mode collapse and premature convergence, including modifying sampling strategies during rollouts \citep{xpo,online-sampler} and selecting training samples based on uncertainty estimates \citep{iterative-dpo,prompt-1,prompt-2}. Other methods explicitly encourage exploration by optimizing objectives beyond immediate reward maximization, such as incorporating uncertainty-aware rewards \citep{sea} or adding exploration bonuses derived from disagreement or novelty signals \citep{selm,vpo,geb}. Collectively, these works demonstrate that, in the LLM setting, exploration is not merely a training stability issue but a critical factor for discovering high-quality and diverse generations under sparse, noisy, or biased reward signals. While effective, many of these methods rely on computationally expensive uncertainty estimation or auxiliary models. In contrast, LAD promotes sustained exploration by reformulating the advantage maximization objective as a distribution-matching objective, naturally maintaining higher policy entropy throughout training without additional uncertainty modeling.

\paragraph{Entropy collapse and regulation in RLVR.}
A related line of work \citep{entropy-mechanism,decompose-entropy,sustain} identifies rapid entropy collapse in the early stages of reinforcement learning with verifiable rewards (RLVR), motivating approaches that encourage exploration by augmenting advantage maximization with entropy-based regularization \citep{entropy-in-adv,8020,ent-1,ent-2}. However, these methods retain advantage maximization as the primary objective, limiting the extent to which entropy regularization can counteract exploitation-dominated updates. Moreover, entropy regularization generally alters the optimization landscape and may shift the optimum away from the original policy improvement objective. In contrast, LAD directly reformulates the learning objective by shifting from advantage maximization to learning the advantage distribution, while preserving the same optimal policy implied by trust-region policy optimization. This reformulation enables principled exploration without introducing auxiliary regularizers or modifying the target optimum.

\section{Analyses \& Proofs}

\subsection{Proofs of Lemma~\ref{lemma:two-distribution}}
\textbf{Lemma 3.1.} \textit{Let $\pi_\theta$ denote the current policy parameterized by $\theta$, and let $\pi_{\mathrm{old}}$ be the behavior model. Suppose that $\pi_\theta$ is optimized using a trust-region constrained reinforcement learning algorithm (e.g., PPO \citep{ppo}) as in Eq.~\ref{eq:trpo}. There exists a Lagrange multiplier $\eta>0$, such that the policy update satisfies
    \begin{equation}
        \frac{\frac{\pi_\theta(y|x)}{\pi_\mathrm{old}(y|x)}}{Z_\pi(x)} = \frac{e^{\frac{A(x,y)}{\eta}}}{Z_A(x)} ,
    \end{equation}
where $\eta$ is a Lagrangian multiplier, $Z_\pi(x) = \sum_y \frac{\pi_\theta(y|x)}{\pi_\mathrm{old}(y|x)}$ and $Z_A(x)=\sum_y e^{\frac{A(x,y)}{\eta}}$ are two normalizations.}

\begin{proof}
Consider one step of policy improvement that maximizes the expected advantage under the new policy while staying close to the old policy $\pi_\mathrm{old}$ (trust-region). For each prompt $x$, we can formulate a constrained maximization problem as 
\begin{align}
    \max_{\pi_\theta} \sum_y A(x,y)\pi_\theta(y|x)&\quad \mathrm{s.t.} \quad\sum_y \pi_\theta(y|x) =1\nonumber\\ \mathrm{and} \quad D_\mathrm{KL}(\pi_\theta(\cdot|x)&\Vert\pi_\mathrm{old}(\cdot|x))\leq \epsilon   .
\end{align}
We can form the per-state Lagrangian with multipliers $\eta\geq 0$ and $\lambda$ as follows,
\begin{align}
    L =& \sum_y A(x,y)\pi_\theta(y|x)  + \lambda(\sum_y \pi_\theta(y|x) - 1)\nonumber\\
    &- \eta\sum_y(\pi_\theta(y|x) \log \frac{\pi_\theta(y|x)}{\pi_\mathrm{old}(y|x)}) - \epsilon.
\end{align}
Stationarity w.r.t. $\pi_\theta$ gives
\begin{equation}
A(x,y)-\eta(\log\frac{\pi_\theta(y|x)}{\pi_\mathrm{old}(y|x)}+1)+\lambda=0,
\end{equation}
which equals $\log\frac{\pi_\theta(y|x)}{\pi_\mathrm{old}(y|x)}= \frac{A(x,y) + \lambda-\eta}{\eta}$.
With some algebra, we have
\begin{equation}
    \frac{\frac{\pi(y|x)}{\pi_\mathrm{old}(y|x)}}{Z_\pi(x)} = \frac{e^{\frac{A(x,y)}{\eta}}}{Z_A(x)} ,
\end{equation}
where $Z_\pi(x) = \sum_y \frac{\pi(y|x)}{\pi_\mathrm{old}(y|x)}$ and $Z_A(x)=\sum_y e^{\frac{A(x,y)}{\eta}}$ are two normalizations.
\end{proof}

\subsection{Proofs of Lemma~\ref{lemma:equivalence}}
\textbf{Lemma 2.}  \textit{Let $\mathcal{L}= \mathbb{E}_{x\sim\rho} \sum_y g_2(x,y) f(\frac{p_\theta(y|x)}{ g_1(x,y)})$, where $p_\theta(\cdot)$ is a probability measure with $\sum_y p(\pi)=1$. $g_1(x,y),g_2(x,y)$ are two functions independent to $p_\theta$. $p_\theta(y|x)$ always satisfy $p_\theta(y|x) \propto g_1(x,y)$ as long as the fraction $\frac{g_2(x,y)}{g_1(x,y)} = Z_g(x)$ is irrespective to $y$.}

\begin{proof}
Similar to Lemma~\ref{lemma:two-distribution}, we adopt the same constrained maximization problem with Lagrange multipliers, and the stationarity condition can be formulated as
\begin{equation}
    \frac{g_2(x,y)}{g_1(x,y)} f'(\frac{p(\pi(y|x))}{g_1(x,y)}) = \lambda,
\end{equation}
where $\lambda$ is a Lagrange multiplier of $\sum_y p(\pi(y|x))=1$. Since $f'$ is an increasing function according to the definition in Definition~\ref{def:f-div}, we have for a fixed state $x$, $\forall y, \frac{p(\pi(y|x))}{g_1(x,y)}$ is a constant, thus $p_\theta(y|x)\propto g_1(x,y)$.
\end{proof}

\subsection{LAD is an $\mathcal{O}(\delta)$-accurate surrogate of the theoretical LAD objective}
\label{sec:apd:theorem}
First, we made some standard assumptions for the following proofs.
\begin{assumption}[finite actions with uniform lower bound $p_\mathrm{min}$]
    For any policy $\pi$ and any state-action pair $x,y$, $1 \geq p_\mathrm{max} \geq \pi(y|x) \geq p_\mathrm{min} >0$.   
\label{assump:p-min}
\end{assumption}
\begin{assumption}[bounded advantage]
For any state-action pair $(x,y)$, the advantage value is bounded by a small value $\vert A(x,y)\vert < \gamma $
\label{assump: bound-adv}
\end{assumption}
\begin{assumption}[small update per step]
Under a small learning rate, we assume that for any $(x,y)$, $\Big\vert \frac{\pi(y|x)}{\pi_\mathrm{old}(y|x)} -1  \Big\vert  < \delta$
\label{assump: small-step}
\end{assumption}
The first assumption is generally established in the LLM setting due to a finite vocabulary size.
For the second assumption, the advantage scale is controllable by constraining the reward scale and normalization, which are generally adopted in existing implementations like GRPO~\citep{grpo}.
For the third assumption, since the sampling distribution $\pi_\mathrm{old}$ is frequently updated (generally within ten steps), the overall distribution of $\pi$ would not have a sharp change compared to $\pi_\mathrm{old}$. Hence, Assumption~\ref{assump: small-step} also holds empirically, especially considering that standard RL implementations for LLMs generally have small learning rates and gradient clipping mechanisms.

With the above assumptions, we obtain the following conditions, 
\begin{equation}
    \Big\vert \frac{Z_A(x)}{Z_\pi(x)} - 1 \Big\vert < \frac{\mathcal{O}(\gamma+\delta)}{1+ \mathcal{O}(\delta)}
\end{equation}
Holding this inequality, we can then prove that 
the practical LAD loss in Eq.~\ref{eq:practical} is an $\mathcal{O}(\delta)$-accurate surrogate of the theoretical LAD objective in Eq.~\ref{eq:theoretical}.

\begin{theorem}[LAD is an $\mathcal{O}(\delta)$-accurate surrogate of the theoretical LAD objective]
Under Assumptions \ref{assump:p-min},\ref{assump: bound-adv},\ref{assump: small-step}, the gap between the theoretical LAD objective $\mathcal{L}_\mathrm{LAD}^\mathrm{theorem}$ and the practical LAD objective  $\mathcal{L}_\mathrm{LAD}$ is bounded by
\begin{equation}
    \Big\vert  \mathcal{L}_\mathrm{LAD}^\mathrm{theorem} - \mathcal{L}_\mathrm{LAD} \Big\vert  < {\mathcal{O}(\gamma+\delta)}
\end{equation}
\label{thm:surrogate-formal}
\end{theorem}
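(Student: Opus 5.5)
The plan is to compare the two objectives summand by summand for each fixed prompt $x$. Both are written as a sum over $y$; the practical loss is rewritten using $\mathbb{E}_{y\sim\pi_\mathrm{old}}[\cdot]=\sum_y \pi_\mathrm{old}(y|x)[\cdot]$. Introduce the shorthand
\begin{equation}
r(y)=\frac{\pi_\theta(y|x)}{\pi_\mathrm{old}(y|x)}, \qquad w(y)=e^{A(x,y)/\eta}, \qquad c=\frac{Z_A(x)}{Z_{\pi_\theta}(x)}.
\end{equation}
In this notation the theoretical integrand is $\frac{w}{Z_A}f\!\left(\frac{r}{w}c\right)$, and the practical integrand is $\pi_\mathrm{old}\,w\,f\!\left(\frac{r}{w}\right)$. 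The gap then splits into two pieces via the triangle inequality.

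The first piece is
\begin{equation}
\sum_y \frac{w}{Z_A}\Big[f\!\left(\tfrac{r}{w}c\right)-f\!\left(\tfrac{r}{w}\right)\Big].
\end{equation}
This is the \emph{argument perturbation} caused by the normalizer ratio $c$. The second piece is
\begin{equation}
\sum_y w\, f\!\left(\tfrac{r}{w}\right)\Big[\tfrac{1}{Z_A}-\pi_\mathrm{old}(y|x)\Big].
\end{equation}
This is the \emph{weight perturbation}.

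\textbf{Bounding the first piece.} Under Assumptions~\ref{assump: bound-adv} and~\ref{assump: small-step}, both $r/w$ and $rc/w$ lie in a compact interval bounded away from $0$. Specifically, $r\in(1-\delta,1+\delta)$ and $w\in(e^{-\gamma/\eta},e^{\gamma/\eta})$, and $c$ is controlled by the inequality stated just before the theorem, $|c-1|<\mathcal{O}(\gamma+\delta)/(1+\mathcal{O}(\delta))$. A convex $f$ that is finite on $(0,\infty)$ is locally Lipschitz. Hence on this interval $|f(u c)-f(u)|\le L\,u\,|c-1|=\mathcal{O}(\gamma+\delta)$, with $L$ the Lipschitz constant on the interval. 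Since $\sum_y w/Z_A=1$, the first piece is $\mathcal{O}(\gamma+\delta)$.

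\textbf{Bounding the second piece.} Since $f(1)=0$, the same Lipschitz bound gives $|f(r/w)|\le L|r/w-1|=\mathcal{O}(\gamma+\delta)$, using $e^{\pm\gamma/\eta}-1=\mathcal{O}(\gamma)$. It then remains to show that $\sum_y w\,\big|\tfrac{1}{Z_A}-\pi_\mathrm{old}\big|$ is $\mathcal{O}(1)$. By the triangle inequality this sum is at most $\sum_y w/Z_A+\sum_y w\,\pi_\mathrm{old}\le 1+e^{\gamma/\eta}$, which is bounded. Multiplying the two bounds gives $\mathcal{O}(\gamma+\delta)$. Note that the alignment of $1/Z_A$ with $\pi_\mathrm{old}$ is not needed here; the smallness of $f$ near $1$ absorbs everything. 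Assumption~\ref{assump:p-min} enters only to keep the action set finite and the ratios well-defined. Taking $\mathbb{E}_{x\sim\rho}$ preserves the uniform bound.

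\textbf{Main obstacle.} The hard part is justifying the preliminary bound on $c=Z_A/Z_{\pi_\theta}$ that the paper states before the theorem. As the sums are written, $Z_A=\sum_y e^{A/\eta}$ and $Z_{\pi_\theta}=\sum_y r(y)$ each scale like the number of actions $|\mathcal{Y}|$. Their ratio is therefore a ratio of two averages: one of values in $(e^{-\gamma/\eta},e^{\gamma/\eta})$, the other of values in $(1-\delta,1+\delta)$. This ratio differs from $1$ by $\mathcal{O}(\gamma/\eta+\delta)$, and I would take that elementary comparison as the proof of the preliminary inequality. A second subtlety is that the local Lipschitz constant $L$ of $f$ must be uniform over the relevant interval. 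For the divergences actually used (JS, Hellinger, KL, and the others) this holds because $f$ is $C^1$ near $1$. For a general convex $f$ I would invoke convexity to bound $|f'|$ on compact subsets of $(0,\infty)$. If the constants are allowed to depend on $\eta$ and $f$, the result is the stated $\mathcal{O}(\gamma+\delta)$.
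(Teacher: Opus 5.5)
Your argument is correct. Its core matches the paper's: a first-order (mean-value or Lipschitz) bound on $f$ for the perturbation of the argument by $Z_A/Z_{\pi_\theta}$, combined with $|Z_A/Z_{\pi_\theta}-1|=\mathcal{O}(\gamma+\delta)$. The decomposition around that core is genuinely different, however, and it repairs a step in the paper's proof.

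The paper rewrites both objectives as expectations under the uniform measure $\mu_x=1/N$. It then bounds each integrand difference pointwise by $C\,|f(c_\theta Z_A/Z_{\pi_\theta})-f(c_\theta)|$ with $C=e^{\gamma/\eta}N\max(p_\mathrm{max},e^{\gamma/\eta}/N)$. In effect it absorbs the mismatch between the prefactors $1/Z_A$ and $\pi_\mathrm{old}(y|x)$ into a constant. That inequality does not hold in general. If $Z_A=Z_{\pi_\theta}$, its right-hand side is $0$, while the paper's $F(x,y)$ equals $(1/Z_A-\pi_\mathrm{old}(y|x))f(c_\theta)$, which need not vanish.

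Your weight-perturbation term is exactly what is lost there. Your handling of it, namely $|f(r/w)|\le L|r/w-1|$ via $f(1)=0$ together with $\sum_y w\,|1/Z_A-\pi_\mathrm{old}|\le 1+e^{\gamma/\eta}$, is what makes the bound go through. Keeping the weights inside the sums also gives constants independent of $N$ and $p_\mathrm{max}$. The paper's final bound instead carries a factor of order $Np_\mathrm{max}$, which can be enormous for LLM-sized response spaces. You also actually prove the preliminary inequality on $Z_A/Z_{\pi_\theta}$, which the paper only asserts.

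One small correction: your remark that all the listed generators are $C^1$ near $1$ fails for total variation, $f(x)=|x-1|$. Your general local-Lipschitz argument for convex $f$ already covers that case, and in this respect it is more general than the paper's Taylor step with $f'(\zeta)$.
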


\begin{proof}
In the proof, we denote $c_\theta = \frac{\pi_\theta(y|x)}{\pi_\mathrm{old}(y|x)}\cdot e^\frac{-A(x,y)}\eta$ for simplicity.
First, we re-express the expectations in $\mathcal{L}_\mathrm{LAD}^\mathrm{theorem}$ and $\mathcal{L}_\mathrm{LAD}$ with respect to a common base distribution $\mu_x$, and absorb the differences into weights.
\begin{align}
  \mathcal{L}_\mathrm{LAD}^\mathrm{theorem} &= \mathbb{E}_{x\sim\rho,y\sim \mu_x}  \frac{e^{\frac{A(x,y)}{\eta}}}{\mu_x(y) Z_A(x)} f(c_\theta \frac{Z_A(x)}{Z_{\pi_\theta}(x)}),\nonumber\\
  \mathcal{L}_\mathrm{LAD} &= \mathbb{E}_{x\sim\rho,y\sim\mu_x}  \frac{e^\frac{A(x,y)}{\eta}}{\mu_x(y)}\pi_\mathrm{old}(y|x) f(c_\theta).\nonumber
\end{align}
Now, both objectives are written as expectations over the same uniform distribution $\mu_x=1/N$, where $N$ is the number of all possible actions. Then we can have the following subtraction,
\begin{align}
    \Big\vert  \mathcal{L}_\mathrm{LAD}^\mathrm{theorem} - \mathcal{L}_\mathrm{LAD} \Big\vert \leq  \mathbb{E}_{x\sim\rho,y\sim\mu_x} \Big\vert \frac{e^\frac{A(x,y)}{\eta}}{\mu_x(y)} F(x,y) \Big\vert \nonumber\\
    F(x,y)\!=\!\frac{1}{Z_A(x)}f(\frac{c_\theta  Z_A(x)}{Z_{\pi_\theta}(x)}) -  \pi_\mathrm{old}(y|x) f(c_\theta) \nonumber
\end{align}
where we use the triangle inequality here. With Assumption~\ref{assump:p-min} and Assumption~\ref{assump: bound-adv}, for any $(x,y)$, we have
\begin{align}
    \Big\vert\frac{e^\frac{A(x,y)}{\eta}}{\mu_x(y)} F(x,y)\Big\vert \leq  C  \Big\vert f(c_\theta \cdot\frac{Z_A(x)}{Z_{\pi_\theta}(x)}) -  f(c_\theta) \Big\vert \nonumber
\end{align}
where $C= e^\frac{\gamma}\eta \cdot N  \cdot \max(p_\mathrm{max}, \frac{e^\frac{\gamma}\eta }{N})$ is a constant. Next, we write the Taylor expansion of $f(c_\theta \cdot\frac{Z_A(x)}{Z_{\pi_\theta}(x)})$ to bound the right term.
\begin{equation}
    f(c_\theta \cdot\frac{Z_A(x)}{Z_{\pi_\theta}(x)}) = f(c_\theta) + f'(\zeta)c_\theta(\frac{Z_A(x)}{Z_{\pi_\theta}(x)} - 1),\nonumber
\end{equation}
where $\zeta$ is a value between $c_\theta$ and $c_\theta \frac{Z_A(x)}{Z_{\pi_\theta}(x)}$.
Hence, for any $(x,y)$, we can obtain
\begin{align}
   \Big\vert  \mathcal{L}_\mathrm{LAD}^\mathrm{theorem} - &\mathcal{L}_\mathrm{LAD} \Big\vert \nonumber\\& \leq \frac{C}{N} \mathbb{E}_{x\sim\rho} \sum_y \Big\vert f'(\zeta)c_\theta(\frac{Z_A(x)}{Z_{\pi_\theta}(x)} - 1) \Big\vert \nonumber
\end{align}
Thus, we have
\begin{equation}
    \Big\vert  \mathcal{L}_\mathrm{LAD}^\mathrm{theorem} - \mathcal{L}_\mathrm{LAD} \Big\vert  < {\mathcal{O}(\gamma+\delta)}\nonumber
\end{equation}
\end{proof}

\begin{figure}[t!]
    \centering
    \includegraphics[width=0.49\linewidth]{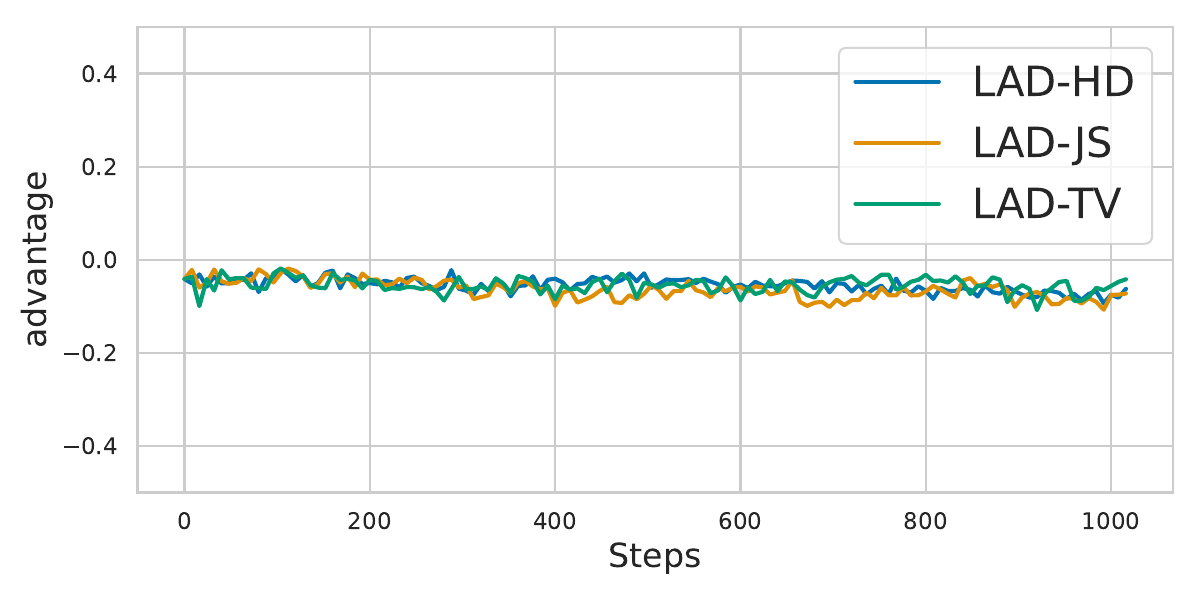}
     \includegraphics[width=0.49\linewidth]{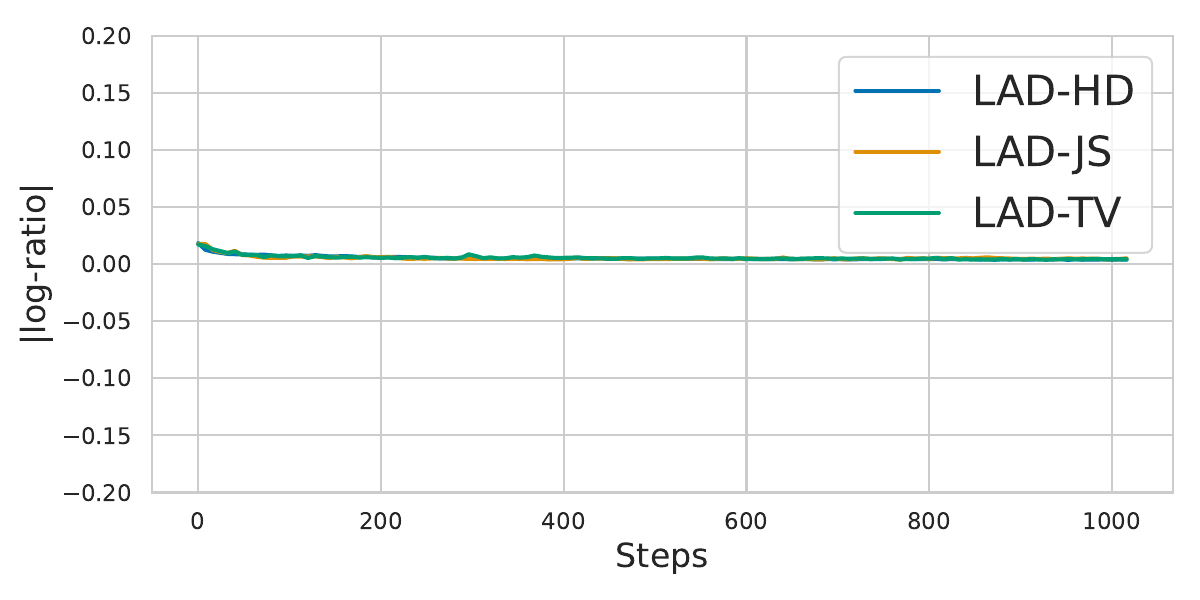}
    \caption{The average advantage value and the log-ratio of $\vert \log \frac{\pi_\theta(y|x)}{\pi_\mathrm{old}(y|x)} \vert $ throughout the training.} 
    \label{fig:validation}
\end{figure}

\paragraph{The empirical validation for assumptions}
To empirically validate our assumption hold true, we show the average advantage value and the log-ratio of $\vert \log \frac{\pi_\theta(y|x)}{\pi_\mathrm{old}(y|x)} \vert $ during the training process of three LAD variants under strict divergence classes. As shown in Figure~\ref{fig:validation}, advantage values are empirically bounded by a small value, i.e., $\gamma$ in Assupmtion~\ref{assump: bound-adv} is small. Meanwhile, the $\frac{\pi_\theta(y|x)}{\pi_\mathrm{old}(y|x)}$ is indeed near 1, i.e., $\epsilon$ in Assumption~\ref{assump: small-step} is close to 0. Therefore, the bound of $\Big\vert \mathcal{L}_\mathrm{LAD}^\mathrm{theorem} -\mathcal{L}_\mathrm{LAD} \Big\vert$ as proved in Theorem~\ref{thm:surrogate-formal} is small, which indicates that $\mathcal{L}_\mathrm{LAD} $ is indeed a close surrogate of $\mathcal{L}_\mathrm{LAD}^\mathrm{theorem}$.

\subsection{Gradient Comparisons}
\label{sec:apd:gradient-comparison}
Beyond the main paper’s comparison between the gradients of the LAD loss and the vanilla policy gradient, we further analyze how the LAD gradient differs from both the PPO gradient and the theoretical LAD formulation.

In practice, PPO enforces trust-region constraints through a clipped surrogate objective. The resulting gradient can be written as
\begin{equation}  \mathbb{E}_{x\sim\rho,y\sim\pi_\theta(\cdot|x)}\!\!\left[\!-\mathrm{sg}(\frac{\pi_\theta}{\pi_\text{old}}, \!\epsilon) \!A(x,\!y)\!\nabla_\theta\! \log \pi_\theta(y|x) \! \right],  
\nonumber
\end{equation}
where $\mathrm{sg}(\frac{\pi_\theta}{\pi_\text{old}},\epsilon)$ is a binary signal function that equals $1$ when $(A(x,y)>0\land\frac{\pi_\theta}{\pi_\text{old}}\leq 1+\epsilon) \lor (A(x,y)<0\land\frac{\pi_\theta}{\pi_\text{old}}\geq 1-\epsilon)$, and $0$ otherwise.
Compared with the vanilla policy gradient in Eq.~\eqref{eq:reinforce}, PPO introduces an additional gating mechanism that suppresses excessively large updates. However, once the gate is active, the update direction remains proportional to the advantage. As a result, PPO still predominantly increases the likelihood of responses with larger advantages, reinforcing already high-probability actions. Although this clipping improves stability, it does not fundamentally alter the advantage-maximization nature of the objective, and therefore still tends to reduce generative diversity in a manner similar to the vanilla policy gradient.
For the theoretical LAD objective, the gradient takes the following form
$\nabla_\theta \mathcal{L}_\mathrm{LAD}^\mathrm{theorem} $ is 
\begin{align}
    &\mathbb{E}_{x\sim\rho,y\sim \mathcal{P}_{\pi_\theta}(\cdot|x)} f'(\frac{\pi_\theta(y|x)}{\pi_\mathrm{old}(y|x)}\cdot e^\frac{-A(x,y)}\eta \cdot \frac{Z_A(x)}{Z_\pi (x)}) \cdot \nonumber\\&\quad (\nabla_\theta \log \pi_\theta(y|x) - \mathbb{E}_{y'\sim \mathcal{P}_{\pi_\theta}(\cdot|x)} \nabla_\theta \log \pi_\theta (y'|x) ),\nonumber
\end{align}
where $\mathcal{P}_{\pi_\theta}$ is the policy-induced distribution defined in Eq.~\ref{eq:pi}.
Compared to the gradient of the practical LAD loss, this gradient includes an explicit centering term that subtracts the expected score function under the policy-induced distribution. This structure enforces competition among actions and prevents any single response from dominating the update solely due to a large advantage.
Nevertheless, the practical LAD loss retains the core behavior of the theoretical gradient while removing intractable normalization terms. Compared with the theoretical LAD gradient, the practical LAD gradient omits the expectation-centering term but preserves the crucial dependence on both the advantage and the likelihood ratio. As a result, practical LAD continues to suppress updates for actions whose probabilities are already sufficiently large, even when their advantages remain high. This mechanism fundamentally differs from advantage maximization and enables LAD to maintain stable optimization while preserving generative diversity.

\subsection{FlowRL is a special case in our framework, but with stricter conditions}
\label{sec:specialcase}
FlowRL \citep{flowrl} proposes to learn a reward-matching objective by directly regressing the log policy ratio onto advantage values using a mean-squared-error loss inspired by GFlowNets~\citep{gflownet}. In contrast, LAD provides a more general and theoretically grounded framework, within which FlowRL arises as a special case under restrictive assumptions on policy updates and parameter scaling.
Specifically, under a small-step learning regime where the updated policy $\pi_\theta$ remains close to the behavior policy $\pi_\mathrm{old}$, the likelihood ratio $\frac{\pi_\theta(y|x)}{\pi_\mathrm{old}(y|x)}$ stays near 
$1$. If we further assume that this ratio satisfies $\frac{\pi_\theta(y|x)}{\pi_\mathrm{old}(y|x)}>\frac{1}{e}$, then the function $f(x) = x(\log x)^2$ convex over the relevant domain $x>\frac{1}{e}$. The convexity ensures that the corresponding $f$-divergence objective is well-defined. Further, when $\eta\rightarrow\infty$, $e^\frac{A}{\eta} \rightarrow 1+\frac{A}{\eta}$. Therefore, for each $x\sim\rho$, the LAD loss can be formulated as 
\begin{equation}
\mathbb{E}_{y\sim\pi_\theta(\cdot|x)}  [ \log Z(x) + \log \frac{\pi_\theta(y|x)}{\pi_\mathrm{old}(y|x)} - \frac{A(x,y)}\eta ]^2,
\label{eq:specialcase}
\end{equation}
where $Z(x) = \frac{Z_A(x)}{eZ_{\pi_\theta}(x)}$.
Eq.~\ref{eq:specialcase} is exactly the objective optimized by FlowRL, showing that FlowRL can be viewed as a special instantiation of LAD.

\section{Experiments}

\begin{table*}[th!]
\renewcommand{\arraystretch}{2}
    \centering
    \caption{LAD losses under different divergences. The $*$ represents the corresponding divergence class is a strict divergence that forces exact matching at its minimum.}
    \label{tab:lad-losses}
    \small
    \scalebox{0.9}{
    \begin{tabular}{cccc}
    \toprule
       Divergence  & $f(x)$ & $\mathcal{L}_\mathrm{LAD}$ \\
       \midrule
        KL divergence & $x\log x$ &$ \mathbb{E}_{x\sim\rho,y\sim\pi_\mathrm{old}(\cdot|x)}  \frac{\pi_\theta(y|x)}{\pi_\mathrm{old}(y|x)} \Big( \log\frac{\pi_\theta(y|x)}{\pi_\mathrm{old}(y|x)} - \frac{A(x,y)}\eta \Big) $ \\
        reverse KL divergence & $-\log x$ & $\mathbb{E}_{x\sim\rho,y\sim\pi_\mathrm{old}(\cdot|x)} - {e^\frac{A(x,y)}{\eta}} \log {\pi_\theta(y|x)}$ \\
        Jeffreys divergence & $(x-1)\log x$ & $ \mathbb{E}_{x\sim\rho,y\sim\pi_\mathrm{old}(\cdot|x)}   (\frac{\pi_\theta(y|x)}{\pi_\mathrm{old}(y|x)}-e^\frac{A(x,y)}{\eta}) \log [\frac{\pi_\theta(y|x)}{\pi_\mathrm{old}(y|x)}e^\frac{-A(x,y)}{\eta}]$ \\
        Total Variance Distance$^*$& $|x-1|$ & $\mathbb{E}_{x\sim\rho,y\sim\pi_\mathrm{old}(\cdot|x)}  {e^\frac{A(x,y)}{\eta}} \Big\vert \frac{\pi_\theta(y|x)}{\pi_\mathrm{old}(y|x)}\cdot e^\frac{-A(x,y)}\eta -1 \Big\vert $\\
        Hellinger Distance$^*$& $\frac{1}{2}(\sqrt t -1)^2$ & $\mathbb{E}_{x\sim\rho,y\sim\pi_\mathrm{old}(\cdot|x)}  {e^\frac{A(x,y)}{\eta}} ( \sqrt{\frac{\pi_\theta(y|x)}{\pi_\mathrm{old}(y|x)}}\cdot e^\frac{-A(x,y)}{2\eta} -1 )^2 $\\
        Jensen–Shannon divergence$^*$& $\frac{1}{2}(x\log x -(x+1)\log\frac{x+1}{2})$&
      { $\begin{aligned}\scriptstyle
\mathbb{E}_{x\sim\rho,y\sim\pi_\mathrm{old}(\cdot|x)}  {e^\frac{A(x,y)}{\eta}} \Big( \frac{\pi_\theta(y|x)}{\pi_\mathrm{old}(y|x)}e^\frac{-A(x,y)}\eta \log(\frac{\pi_\theta(y|x)}{\pi_\mathrm{old}(y|x)}\\ \scriptstyle \cdot e^\frac{-A(x,y)}\eta)   - (\frac{\pi_\theta(y|x)}{\pi_\mathrm{old}(y|x)}e^\frac{A(x,y)}\eta +1)\log (\frac{\pi_\theta(y|x)}{2\pi_\mathrm{old}(y|x)}e^\frac{A(x,y)}\eta)  \Big) \nonumber
        \end{aligned}$} \\
        \bottomrule
    \end{tabular}}
\end{table*}

\begin{table*}[t!]
    \centering
     \caption{The detailed math reasoning performances of LAD with different hyperparameters.}
    \label{tab:append-eta-ablation}
    \scalebox{0.8}{
    \begin{tabular}{cc|ccccccc}
    \toprule
      Methods & $1/\eta$ & MATH & \makecell[c]{Olympiad\\Bench}& \makecell[c]{Minerva}& \makecell[c]{AIME\\2024}&\makecell[c]{AIME\\2025}& AMC & Average \\
      \midrule
      & & \multicolumn{7}{c}{Avg@16}\\
      GRPO& - &75.71 &39.24 &36.63& 13.89& 4.90& 53.39& 37.29\\
    
      \midrule
        
        \multirow{7}{*}{LAD-JS}  & 1/4 & 75.24 & 38.83& 35.59&15.68& 5.70&50.34 & 36.90 \\
        &1/2& 76.20& 38.02& 38.07& 18.15& 6.73 & 53.08& 38.38 \\
        &1& 74.84& 37.16& 39.84& 14.95&11.28 & 55.98& 39.00\\
         &2 & 76.90& 36.50& 34.93& 18.67& 9.13& 55.12& 38.54 \\
        & 4 &77.60 &41.29 &36.81 &19.59 &8.71 &56.37& 40.06 \\
        & 8 & 77.15& 42.06& 35.64& 14.54& 12.11& 49.34& 38.47 \\
        & 16 & 74.89& 39.30& 36.01& 18.54& 12.40& 51.60& 38.79\\
        &32& 76.47& 40.02& 34.74& 14.15& 9.33& 51.08& 37.63\\  
        \midrule
        \midrule
        & & \multicolumn{7}{c}{Pass@16}\\
        GRPO & - &89.80& 57.78& 53.31& 27.60& 23.23 &74.92 &54.44\\

        \midrule

        \multirow{7}{*}{LAD-JS}& 1/4& 92.20& 64.44& 55.88& 34.38& 25.83& 79.82& 58.76\\
        &1/2& 91.60& 61.93& 58.46& 35.63& 26.67& 80.16& 59.08\\
        &1& 87.4& 53.78& 55.15& 35.42& 32.92& 78.65& 57.22\\
        &2& 91.40& 57.33& 57.35& 40.42& 30.42& 80.91&59.64\\
        &4& 92.40& 64.15 &56.62& 40.63 &30.73 &84.30 &61.47\\
        &8& 93.00& 68.15& 52.57& 34.58& 31.04& 79.78& 59.85   \\
        &16 & 91.60& 65.33& 56.99& 37.50& 29.79& 80.08& 60.22 \\ 
        &32& 94.40& 65.63& 57.72& 38.13& 29.79& 81.36& 61.17\\ 
        \bottomrule
    \end{tabular}}
\end{table*}

\renewcommand{\arraystretch}{0.9}
\setlength{\tabcolsep}{9pt}
\begin{table*}[t!]
    \centering
      \caption{Pass@32 performance of different losses on math reasoning benchmarks. The best two results under a specific metric are highlighted in \textbf{bold} and \underline{underlined}.}
    \label{tab:main-pass}
    \scalebox{0.9}{
    \begin{tabular}{c|ccccccc}
    \toprule
        Methods  & MATH & \makecell[c]{Olympiad\\Bench}& \makecell[c]{Minerva}& \makecell[c]{AIME\\2024}&\makecell[c]{AIME\\2025}& AMC& Average  \\
        \midrule
       
        {GRPO}{\footnotesize \citep{grpo}} &90.60&60.15&55.88&30.52&26.77&80.50 & 57.40\\
        EntAdv{\footnotesize \citep{entropy-in-adv}} & 90.60&62.51&55.51&35.83&34.27&81.93&60.10 \\
        KLCov{\footnotesize \citep{entropy-mechanism}} & 92.80& 64.15& 56.99& 36.25& \textbf{41.25}& \underline{85.99} & 63.04 \\
        ClipCov{\footnotesize \citep{entropy-mechanism}} & 91.40 & 61.48 & \textbf{60.29}&37.40 & 32.08 & 80.61& 60.54\\
       {FlowRL}{\footnotesize \citep{flowrl}} &\textbf{94.20}&\textbf{69.04}&59.56&\underline{37.50} & 35.52&82.94 & \underline{63.13} \\
        \midrule
       {LAD} &\underline{93.80}&\underline{67.70} & \underline{59.93}&\textbf{47.19}&\underline{37.08}&\textbf{88.86}& \textbf{65.76}\\
        \bottomrule
    \end{tabular}}
\end{table*}

\subsection{Loss Instances}

All the instances involved in our experiments are listed in Table~\ref{tab:lad-losses}. It reveals that LAD loss is highly extensible to different divergence classes. As the star sign indicated, Jensen–Shannon divergence, Hellinger Distance, and Total Variance Distance are strict divergences, while KL divergence, the reverse KL divergence, and Jeffreys divergence are not strict divergences. As analyzed in \S\ref{sec:exp-further-study}, the strict divergence classes are recommended, which generally yield better overall performance.

\subsection{Implementation Details}
\label{sec:apd:implement}

\paragraph{Math reasoning.}
The experiments on math reasoning tasks are all conducted on two NVIDIA H200 140GB. We train our models on the DAPO-MATH dataset \citep{dapo}, following the experimental protocol of \citet{entropy-mechanism}. We use Qwen2.5-7B~\citep{qwen2.5} as our LLM backbone, and Jensen-Shannon divergence to implement LAD loss in the main experiments. At each rollout step, we sample 8 responses with a max length of 8192 tokens per prompt for a batch of 256 prompts using a sampling temperature of 1, and subsequently perform 8 policy update steps using the collected trajectories with the learning rate 1e-6. We apply the dynamic sampling strategy and the clip-higher technique from DAPO \citep{dapo} uniformly across all baselines to ensure a fair comparison. Evaluation is conducted on a diverse set of mathematical reasoning benchmarks, including MATH500 \citep{prm800k}, AIME 2024, AIME 2025 \citep{aime}, AMC \citep{aime}, OlympiadBench \citep{olympiadbench}, and Minerva \citep{minerva}. These settings ensure direct comparability with prior work. We follow \citet{flowrl} to assess logical diversity using an LLM-as-a-Judge framework with GPT-4-Turbo. Specifically, we generate 16 responses per question on AIME 2024/2025 and use the same evaluation prompt as shown in Box~\ref{box:gptprompt}. For the two 1.5B models, we change the max lengths to 4096 and keep other settings the same. For the ablation studies of the hyperparameter value, we kept all settings the same, except for the $\eta$. For the ablation studies of the divergence class, we conduct a hyperparameter search among $1/\eta\in\{0.25,1,4,16\}$. For all baseline methods, we adopt the default hyperparameters identified as optimal through the hyperparameter searches described in the original papers.

\paragraph{Code reasoning.}
The experiments on math reasoning tasks are all conducted on eight NVIDIA A100 80GB. We train our models on the dataset of DeepCoder \citep{deepcoder}, following the experimental protocol of \citep{flowrl}. We use DeepSeek-R1-Distill-Qwen-7B as our LLM backbone, and Jensen-Shannon divergence to implement LAD loss in the main experiments. The batch size is set to 64, and the learning rate is set to 1e-6. For all baseline methods, we adopt the default hyperparameters identified as optimal through the hyperparameter searches described in the original papers. We evaluate the code reasoning performance on LiveCodeBench \citep{livecodebench}, CodeForces \citep{codeforces}, and HumanEval+ \citep{humaneval}. 

\paragraph{Evaluation metrics.}
Pass@k is a common metric in reasoning tasks. It measures the probability that at least one correct solution appears among the top k generated outputs. Higher Pass@k means the model is more likely to produce a correct answer within its top candidates.
Avg@k represents the average accuracy over the top k generated outputs. Unlike Pass@k, which is binary (pass/fail), Avg@k captures overall quality across multiple candidates.
Lexical diversity measures the variety of vocabulary used in a text. Higher lexical diversity indicates less repetition and a broader range of words, often used to assess richness or creativity in language generation.
Logical Diversity evaluates the variety of reasoning patterns, argument structures, or logical paths in generated outputs. Higher logical diversity suggests the model can approach a problem from multiple valid reasoning perspectives rather than repeating the same logic.

\subsection{Experimental Results}
\label{sec:apd:exp-results}

\paragraph{Supplemented results of the main experiments on math reasoning.}
As shown in Table~\ref{tab:main-pass}, for the evaluation metric Pass@32. Across six benchmarks, LAD achieves the best average performance (65.76), outperforming the second-best method (63.13).

\paragraph{Supplemented results of the ablation studies on divergence classes.}
As shown in Table~\ref{tab:append-eta-ablation}, the performances of LAD variants with stricter divergence classes are generally better than ones with non-stricter divergence classes.

\paragraph{Supplemented results of the ablation studies on the hyperparameter $\eta$.}
As shown in Table~\ref{tab:append-eta-ablation}, LAD can perform well in a large range of $\eta$ values.  Notably, across nearly all settings, LAD consistently outperforms GRPO, indicating that its effectiveness is not critically dependent on precise tuning of $\eta$.

\renewcommand{\arraystretch}{0.9}
\setlength{\tabcolsep}{9pt}
\begin{table*}[t!]
    \centering
      \caption{Performance of different losses on math reasoning benchmarks. The best two results under a specific metric are highlighted in \textbf{bold} and \underline{underlined}.}
    \label{tab:apd-div}
    
    \begin{tabular}{cc|ccccccc}
    \toprule
        Methods & $k$ & MATH & \makecell[c]{Olympiad\\Bench}& \makecell[c]{Minerva}& \makecell[c]{AIME\\2024}&\makecell[c]{AIME\\2025}& AMC& Average  \\
        \midrule
        &&\multicolumn{7}{c}{Avg@$k$}\\
        \multirow{3}{*}{LAD-KL} & @1& 76.80& 37.33& 38.97& 20.00& 5.83& 51.13& 38.01  \\
        & @16& 76.35& 36.93& 39.18& 20.03& 4.64& 51.74& 38.15 \\
        & @32& 76.20& 36.83& 38.73&19.91& 4.68& 51.63& 37.99\\
        \midrule
        \multirow{3}{*}{LAD-JF} & @1&72.60&32.74&37.13& 15.52 & 7.50& 53.69 & 36.20 \\
        & @16&72.65&32.54& 34.92& 15.96& 7.23& 53.47 & 36.13\\
        & @32&72.83&32.67& 34.60& 15.79 & 7.24& 53.48 & 36.10 \\
        \midrule
        \multirow{3}{*}{LAD-rKL} & @1&73.20&37.62& 37.50& 11.67& 8.33& 46.15& 35.41 \\
        & @16& 73.86& 36.63& 36.28&12.21& 8.03&45.93&35.49\\
        & @32 & 73.82& 36.76 & 36.12& 12.25& 8.10& 46.08& 35.52\\
        \midrule

        \multirow{3}{*}{LAD-TV} & @1& \underline{76.80}&\underline{40.59}&35.66&15.31&6.25&55.61&38.37\\
        &@16&77.00&\underline{40.92}&\textbf{38.14}&14.98&7.58&55.27&38.98 \\
        &@32&76.83&40.66&\textbf{37.74}&14.96&\underline{7.61}&55.19& 38.83 \\
        \midrule
        
        \multirow{3}{*}{LAD-HD} & @1& \textbf{77.60}& 39.70 & 35.29 & \underline{15.63}&\underline{7.92}& \textbf{57.34}& \underline{38.91}\\
        &@16& 76.88 & 39.06 & \underline{36.83} & \underline{16.27}& \underline{8.08}&\textbf{57.40}& \underline{39.08}\\
        &@32& 76.83 & 38.98 & \underline{36.74} & \underline{16.18}& 7.29& \textbf{57.40}& \underline{38.90}\\
        \midrule
        
        \multirow{3}{*}{LAD-JS} & @1&\underline{76.80}&40.30&\underline{36.40}&\textbf{19.38}&\textbf{8.65}&\underline{57.00}&\textbf{39.76}  \\
        &@16&\textbf{77.60}&\textbf{41.29}&36.81&\textbf{19.59}&\textbf{8.71}&\underline{56.37}& \textbf{40.06}\\
        &@32&\textbf{77.66}&\textbf{41.25}&36.68&\textbf{19.60}&\textbf{8.84}&\underline{56.45}& \textbf{40.08}\\
        \midrule
        &&\multicolumn{7}{c}{Pass@$k$}\\
        \multirow{3}{*}{LAD-KL} & @1&86.40& 51.11& 56.25& 34.48& 17.92&66.19&52.39 \\
        & @16& 87.80&55.26& 60.29& 39.69& 22.50& 71.16 & 56.12\\
        & @32& 88.60& 57.19& 61.40& 43.85& 26.46& 76.02 & 58.92 \\
        \midrule
        \multirow{3}{*}{LAD-JF} & @8& 88.2& 51.56& 54.04& 32.50 & 21.25& 77.18 & 54.12\\
        & @16& 90.4& 56.74& 58.09& 38.85& 26.98& 82.38 & 58.90\\
        & @32& 92.2& 60.59& 59.93& 46.04& 33.85& 86.71 & 63.22 \\
        \midrule
        \multirow{3}{*}{LAD-rKL} & @8&89.60& 58.67& 53.68& 29.06& 25.41& 72.29& 54.78 \\
        & @16& 91.20& 64.30& 59.93& 34.90& 30.83& 79.97& 60.19\\
        & @32& 93.60& 68.89& 62.87& 43.75& 37.50&86.07&65.45   \\
        \midrule

        \multirow{3}{*}{LAD-TV} & @8&87.60&55.70&\textbf{55.51}&26.35&\textbf{26.04} &73.53& 54.12\\
        &@16 &90.20&58.96&\textbf{60.29}&29.79&\underline{31.15}&77.67&58.01\\
        &@32&91.60&62.07&\textbf{62.50}&34.06&35.00&81.36&61.10\\
        \midrule
        
        \multirow{3}{*}{LAD-HD} & @8&88.20&56.59&\underline{54.41}&\underline{31.04}&\underline{25.42}&\underline{75.00} &\underline{55.11} \\
        &@16& 91.00&60.89&\underline{59.93}& \underline{36.04}&\textbf{31.67}&\underline{79.71}& \underline{59.87}\\
        &@32& 93.00& 64.74&\textbf{62.50}& \underline{42.71}&\textbf{38.23}&\underline{84.22}& \underline{64.23}\\
        \midrule
        \multirow{3}{*}{LAD-JS} & @8&\underline{89.00}&\underline{59.56}&54.04&\textbf{35.00}&\underline{25.42}&\textbf{78.58}&\textbf{56.93}\\
        &@16&\underline{92.40}&\underline{64.15 }&56.62&\textbf{40.63}&30.73&\textbf{84.30}& \textbf{61.47}\\
        &@32&\underline{93.80}&\underline{67.70} & 59.93&\textbf{47.19}&\underline{37.08}&\textbf{88.86}& \textbf{65.76}\\
        \bottomrule
    \end{tabular}
\end{table*}

\begin{figure*}[t]
\centering

\begin{mybox}[GPT-4 prompt for diversity judgement]
\label{box:gptprompt}
<System Prompt>\\
You are evaluating the DIVERSITY of solution approaches for a mathematics competition problem. Focus on detecting even SUBTLE differences in methodology that indicate different problem-solving strategies.

<User Prompt>\\
PROBLEM:\\
\{problem\}\\
16 SOLUTION ATTEMPTS:\\
\{formatted responses\}\\
EVALUATION CRITERIA - Rate diversity from 1 to 5:\\

Score 1 - Minimal Diversity:\\
• 14+ responses use essentially identical approaches\\
• Same mathematical setup, same variable choices, same\\ solution path\\
• Only trivial differences (arithmetic, notation, wording)\\
• Indicates very low exploration/diversity in the generation process\\

Score 2 - Low Diversity:\\
• 11-13 responses use the same main approach\\
• 1-2 alternative approaches appear but are rare\\
• Minor variations within the dominant method (different substitutions, orderings)\\
• Some exploration but heavily biased toward one strategy\\

Score 3 - Moderate Diversity:\\
• 7-10 responses use the most common approach\\
• 2-3 distinct alternative approaches present\\
• Noticeable variation in problem setup or mathematical techniques\\
• Balanced mix showing reasonable exploration\\

Score 4 - High Diversity:\\
• 4-6 responses use the most common approach\\
• 3-4 distinct solution strategies well-represented\\
• Multiple mathematical techniques and problem framings\\
• Strong evidence of diverse exploration strategies\\

Score 5 - Maximum Diversity:\\
• No single approach dominates ($\leq$ 3 responses use same method)\\
• 4+ distinctly different solution strategies\\
• Wide variety of mathematical techniques and creative approaches\\
• Excellent exploration and generation diversity\\

IMPORTANT: Focusing on the DIVERSITY of the attempted approaches. Return ONLY a number from 1 to 5.
\end{mybox}
\end{figure*}

\end{document}